\newcommand{\bp}{\mathbf{p}}
\newcommand{\bx}{\mathbf{x}}
\newcommand{\bz}{\mathbf{z}}
\newcommand{\bq}{\mathbf{q}}
\newcommand{\be}{\mathbf{e}}
\newcommand{\calX}{\mathcal{X}}
\newcommand{\calY}{\mathcal{Y}}
\newcommand{\calA}{\mathcal{A}}
\newcommand{\calG}{\mathcal{G}}
\newcommand{\calH}{\mathcal{H}}
\newcommand{\calO}{\mathcal{O}}
\newtheorem{assumption}{Assumption}
\newtheorem{theorem}{Theorem}
\DeclareMathOperator*{\argmax}{arg\,max}
\DeclareMathOperator*{\argmin}{arg\,min}
\newenvironment{framework}[1][htb]{%
    \renewcommand{\ALG@name}{Framework}
   \begin{algorithm}[#1]%
  }{\end{algorithm}}
\title{Online Adaptation to Label Distribution Shift}
\author{%
  Ruihan Wu \\
  Cornell University\\
  \texttt{rw565@cornell.edu} \\
  \And
  Chuan Guo \\
  Facebook AI Research\\
  \texttt{chuanguo@fb.com} \\
  \And
  Yi Su \quad\quad\quad Kilian Q. Weinberger\\
  Cornell University\\
  \texttt{\{ys756, kqw4@cornell.edu\}} \\
}
\begin{document}

\maketitle
\begin{abstract}
Machine learning models often encounter distribution shifts when deployed in the real world. 
In this paper, we focus on adaptation to label distribution shift in the \emph{online setting}, where the test-time label distribution is continually changing and the model must dynamically adapt to it \emph{without observing the true label}. 
Leveraging a novel analysis, we show that the lack of true label does not hinder estimation of the expected test loss, which enables the reduction of online label shift adaptation to conventional online learning. Informed by this observation, we propose adaptation algorithms inspired by classical online learning techniques such as Follow The Leader (FTL) and Online Gradient Descent (OGD) and derive their regret bounds. We empirically verify our findings under both simulated and real world label distribution shifts and show that OGD is particularly effective and robust to a variety of challenging label shift scenarios.
\end{abstract}

\section{Introduction}

A common assumption in machine learning is that the training set and test set are drawn from the same distribution~\cite{murphy2012machine}. However, this assumption often does not hold in practice when models are deployed in the real world~\cite{quinonero2009dataset, amodei2016concrete}. One common type of distribution shift is \emph{label shift}, where the conditional distribution $p(\bx | y)$ is fixed but the label distribution $p(y)$ changes over time. This phenomenon is most typical when the label $y$ is the causal variable and the feature $\bx$ is the observation~\cite{scholkopf2012causal}. For instance, a model trained to diagnose malaria can encounter a much higher prevalence of the disease in tropical regions.

Prior work have primarily studied the problem of label shift in the offline setting~\cite{zhang2013domain, lipton2018detecting, azizzadenesheli2019regularized, alexandari2020maximum}, where the phenomenon occurs only once after the model is trained. However, in many common scenarios, the label distribution can change continually over time. For example, the prevalence of a disease such as influenza changes depending on the season and whether an outbreak occurs. The distribution of news article categories is influenced by real world events such as political tension and the economy. In these scenarios, modeling the test-time label distribution as being stationary can be inaccurate and over-simplistic, especially if the model is deployed over a long period of time.

To address this shortcoming, we define and study the problem of \emph{online label shift}, where the distribution shift is modeled as an online process. An adaptation algorithm in this setting is tasked with making sequential predictions on random samples from a drifting test distribution and dynamically adjusting the model's prediction in real-time. Different from online learning, the test label is \emph{not} observed after making a prediction, hence making the problem much more challenging.

Nevertheless, we show that it is possible to adapt to the drifting test distribution in an online fashion, despite never observing a single label.
In detail, we describe a method of obtaining unbiased estimates of the expected 0-1 loss and its gradient using only unlabeled samples. This allows the reduction of online label shift adaptation to conventional online learning, which we then utilize to define two algorithms inspired by classical techniques---Online Gradient Descent (OGD) and Follow The History (FTH)---the latter being a close relative of the Follow The Leader algorithm. Under mild and empirically verifiable assumptions, we prove that OGD and FTH are as optimal as a fixed classifier that had knowledge of the shifting test distribution in advance.

To validate our theoretical findings, we evaluate our adaptation algorithms on CIFAR-10~\cite{krizhevsky2009learning} under simulated online label shifts, as well as on the ArXiv dataset\footnote{ \url{https://www.kaggle.com/Cornell-University/arxiv}} for paper categorization, which exhibits real world label shift across years of submission history. We find that OGD and FTH are able to consistently match or outperform the optimal fixed classifier, corroborating our theoretical results. Furthermore, OGD empirically achieves the best classification accuracy on average against a diverse set of challenging label shift scenarios, and can be easily adopted in practical settings.

\section{Problem Setting}
\label{sec:background}

We first introduce the necessary notations and the general problem of label shift adaptation. Consider a classification problem with feature domain $\calX$ and label space $\calY = \{1,\ldots,M\}$. We assume that a classifier $f : \calX \rightarrow \calY$ operates by predicting a probability vector $P_f(\bx)\in\Delta^{M-1}$, where $\Delta^{M\!-\!1}$ denotes the $(M\!-\!1)$-dimensional probability simplex. 
The corresponding classification function is $f(\bx) = \argmax_{y \in \calY} P_f(\bx)[y]$. While the focus of our paper is on neural networks,
other models such as SVMs~\cite{cortes1995support} and decision trees~\cite{quinlan1986induction} that do not explicitly output probabilities can be calibrated to do so as well~\cite{zadrozny2001obtaining, niculescu2005predicting}.

\textbf{Label distribution shift.} Let $Q$ be a data distribution defined on $\calX \times \calY$ and denote by $Q(\bx | y)$ the class-conditional distribution, and by $Q(y)$ the marginal distribution, so that $Q(\bx, y) = Q(\bx | y) Q(y)$. Standard supervised learning assumes that the training distribution $Q_\text{train}$ and the test distribution $Q_\text{test}$ are identical. In reality, a deployed model often encounters \emph{distributional shifts}, where the test distribution $Q_\text{test}$ may be substantially different from $Q_\text{train}$. We are interested in the scenario where the class-conditional distribution remains constant (\emph{i.e.}, $Q_\text{test}(\bx | y) = Q_\text{train}(\bx | y)$ for all $\bx \in \calX, y \in \calY$) while the marginal label distribution  changes (\emph{i.e.}, $Q_\text{test}(y) \neq Q_\text{train}(y)$ for some $y \in \calY$), a setting  we refer to as \emph{label shift}. The problem of label shift adaptation is to design algorithms that can adjust the prediction of a classifier $f$ trained on $Q_\text{train}$ to perform well on $Q_\text{test}$.

\textbf{Offline label shift.} The general setting in this paper and in prior work on label shift adaptation is that the label marginal distribution $Q_\text{test}(y)$ is unknown. Indeed, the model may be deployed in a foreign environment where prior knowledge about the label marginal distribution is limited or inaccurate. Prior work tackle this challenge by estimating $Q_\text{test}(y)$ using \emph{unlabeled samples} drawn from $Q_\text{test}$~\cite{saerens2002adjusting, lipton2018detecting, azizzadenesheli2019regularized, garg2020unified, alexandari2020maximum}.
Such adaptation methods operate under the \emph{offline setting} since the distribution shift occurs only once, and the adaptation algorithm is provided with an unlabeled set of samples from the test distribution $Q_\text{test}$ upfront. 

\textbf{Online label shift.} In certain scenarios, offline label shift adaptation may not be applicable. 
For example, consider a medical diagnosis model classifying whether a patient suffers from flu or hay fever. Although the two diseases share similar symptoms throughout the year, one is far more prevalent than the other, depending on the season and whether an outbreak occurs. 
Importantly, this label distribution shift is gradual and continual, and the model must make predictions on incoming patients in real-time without observing an unlabeled batch of examples first. 
Motivated by these challenging use cases, we 
deviate from prior work and study the problem where $Q_\text{test}$ is not stationary during test time and must be adapted towards in an online fashion. 

\begin{framework}[!t]
\textbf{Input:} $\calA$
\begin{algorithmic}[1]
 \STATE $f_1 = f_0$;
 \FOR{$t = 1,\cdots, T$}
 \STATE Nature provides $\bx_t$, where $(\bx_t, y_t)\sim Q_t$ and $Q_t(\bx|y_t) = Q_0(\bx|y_t)$
 \STATE Learner predicts $f_t(\bx_t)$
 \STATE Learner updates the classifier: $f_{t+1} = \calA(f_0, \{\bx_1, \cdots, \bx_t\})$
 \ENDFOR
\end{algorithmic}
\caption{The general framework for online label shift adaptation.}
\label{alg:framework}
\end{framework}

Formally, we consider a discrete time scale with a shifting label distribution $Q_t(y)$ for $t = 0,1,2,\ldots$, where $Q_0 = Q_\text{train}$ denotes the training distribution.
For each $t$, denote by $\bq_t\!\in\!\Delta^{M-1}$ the label marginal probabilities so that $\bq_t[i] = \mathbb{P}_{Q_t}(y_t = i)$ for all $i = 1,\ldots,M$. Let $\calH$ be a fixed hypothesis space, let $f_0 \in \calH$ be a classifier trained on samples from $Q_0$, and let $f_1 = f_0$.

At time step $t$, nature provides $(\bx_t, y_t) \sim Q_t$ and the learner predicts the label of $\bx_t$ using $f_t$. Similar to prior work on offline label shift, we impose a crucial but realistic restriction that \emph{the true label $y_t$ and any incurred loss are both unobserved}. This restriction mimics real world situations such as medical diagnosis where the model may not receive any feedback while deployed. Despite this limitation, the learner may still seek to adapt the classifier to the test-time distribution after predicting on $\bx_t$. Namely, the adaptation algorithm $\calA$ takes as input $f_0$ and the unlabeled set of historical data $\{\bx_1, \cdots, \bx_t\}$ and outputs a new classifier $f_{t+1} \in \calH$ for time step $t+1$. This process is repeated until some end time $T$ and summarized in pseudo-code in Framework \ref{alg:framework}. 

To quantify the effectiveness of the adaptation algorithm $\calA$, we measure the expected regret across time steps $t=1,\ldots,T$. Formally, we consider the expected 0-1 loss
\begin{equation}
    \label{eq:loss}
    \ell(f; Q) = \mathbb{P}_{(\mathbf{x}, y)\sim Q} (f(\bx) \neq y),
\end{equation}
which we average across $t$ and measure against the best-in-class predictor from $\calH$:
\begin{equation}
    \label{eq:regret}
    \text{Regret} = \frac{1}{T}\sum_{t=1}^T	\ell(f_t; Q_t) - \inf_{f \in \calH} \frac{1}{T}\sum_{t=1}^T \ell(f; Q_t).
\end{equation}
The goal of online label shift adaptation is to design an algorithm $\calA$ that minimizes expected regret.
\section{Reduction to Online Learning}
\label{sec:framework}

One of the main differences between online label shift adaptation and conventional online learning is that in the former, the learner \emph{does not} receive any feedback after making a prediction. This restriction prohibits the application of classical online learning techniques that operate on a loss function observed after making a prediction at each time step. In this section, we show that in fact the expected 0-1 loss can be estimated using only the \emph{unlabeled sample} $\bx_t$, which in turn reduces the problem of online label shift adaptation to online learning. This reduction enables a variety of solutions derived from classical techniques, which we then analyze in \autoref{sec:method}.

\textbf{Estimating the expected 0-1 loss.}
For any classifier $f$ and distribution $Q_t$, let $C_{f,Q_t} \in \mathbb{R}^{M \times M}$ denote the confusion matrix for classifying samples from $Q_t$. Formally, $C_{f, Q_t}[i, j] = \mathbb{P}_{\bx_t \sim Q_t(\cdot|y_t=i)} ( f(\bx_t) = j )$. Then the expected 0-1 loss of $f$ under distribution $Q_t$ (\emph{cf.} \autoref{eq:loss}) can be written in terms of the confusion matrix $C_{f, Q_t}$ and the label marginal probabilities $\bq_t$ by:
\begin{align*}
    \ell(f; Q_t) &= \mathbb{P}_{(\bx_t, y_t) \sim Q_t}(f(\bx_t) \neq y_t) 
    = \sum_{i=1}^M \mathbb{P}_{\bx_t \sim Q_t(\cdot | y_t = i)} (f(\bx_t) \neq i) \cdot \mathbb{P}_{Q_t}(y_t = i) \\
    &= \sum_{i=1}^M \left( 1 - \mathbb{P}_{\bx_t \sim Q_t(\cdot | y = i)} (f(\bx_t) = i) \right) \cdot \bq_t[i] = \left\langle \mathbf{1} - {\sf diag}\left(C_{f, Q_t}\right), \bq_t \right\rangle,
\end{align*}
where $\mathbf{1}$ denotes the all-1 vector, and ${\sf diag}(C_{f, Q_t})$ is the diagonal of the confusion matrix. The difficulty in computing $\ell(f; Q_t)$ is that both $C_{f,Q_t}$ and $\bq_t$ depend on the unknown distribution $Q_t$.
However, by the label shift assumption that the conditional distribution $Q_t(\bx | y)$ does not change over time, it immediately follows that $C_{f, Q_t} = C_{f, Q_0} \; \forall t$, and therefore $\ell(f; Q_t)$ is \emph{linear} in $\bq_t$:
\begin{equation}
    \ell(f; Q_t) = \left\langle \mathbf{1} - {\sf diag}\left(C_{f, Q_0}\right), \bq_t \right\rangle.
    \label{eq:loss_conf}
\end{equation}
In the theorem below, we utilize this property to derive unbiased estimates of $\ell(f; Q_t)$ and its gradient $\nabla_f \ell(f; Q_t)$  with the assumption that ${\sf diag}\left(C_{f, Q_0}\right)$ is differentiable with respect to $f$; we discuss this assumption in more detail in Section \ref{sec:ogd}. The proof is included in the appendix and is inspired by prior work on offline label shift adaptation~\cite{lipton2018detecting}.
\begin{assumption}
\label{hyp:differentiability}
${\sf diag}\left(C_{f, Q_0}\right)$ is differentiable with respect to $f$.
\end{assumption}
\begin{theorem}
\label{thm:reduction}
Let $f$ be any classifier and let $f_0$ be the classifier trained on data from $Q_0$. Suppose that $f_0$ predicts $f_0(\bx_t) = i$ on input $\bx_t \sim Q_t$ and let $\be_i$ denote the one-hot vector whose non-zero entry is $i$. If the confusion matrix $C_{f_0, Q_0}$ is invertible then $\hat{\bq}_t = \left( C_{f_0, Q_0}^\top \right)^{-1} \be_i$ is an unbiased estimator of the label marginal probability vector $\bq_t$. Further, we obtain unbiased estimators of the loss and gradient of $f$ for $Q_t$ with Assumption \ref{hyp:differentiability}:
\begin{align*}
    \ell(f; Q_t)&=\mathbb{E}_{Q_t} \left[ \left\langle \mathbf{1} - {\sf diag}\left(C_{f, Q_0}\right), \hat{\bq}_t \right\rangle \right] , \\
    \nabla_f \ell(f; Q_t)&=\mathbb{E}_{Q_t} \left[ J_f^\top \hat{\bq}_t  \right] ,
\end{align*}
where $J_f = \frac{\partial}{\partial f} \left[ \mathbf{1} - {\sf diag}\left(C_{f, Q_0}\right) \right]$ denotes the Jacobian of $\mathbf{1} - {\sf diag}\left(C_{f, Q_0}\right)$ with respect to $f$.
\end{theorem}

Note that in the theorem, the true confusion matrices $C_{f, Q_0}$ and $C_{f_0, Q_0}$ require knowledge of the training distribution $Q_0$ and are generally unobservable. We assume we have the full access of distribution $Q_0$ for the theoretical analysis in the remainder of the paper, including both values and gradients of $C_{f, Q_0}$ given any $f$. In practice, we can estimate the value of $C_{f, Q_0}$ by using a large labeled hold-out set $D_0$ drawn from $Q_0$ and the estimation error that can be reduced arbitrarily by increasing the size of $D_0$~\cite{lipton2018detecting}. For the detail of the practical gradient estimation, we discuss it in Section \ref{sec:ogd}.

\begin{framework}[!t]
\textbf{Input:} $\calA$, $f_0$, $\mathbf{q}_0$, $D_{0}$
\begin{algorithmic}[1]
 \STATE $f_1 = f_0$;
 \FOR{$t = 1,\cdots, T$}
 \STATE Nature provides $\bx_t$, where $(\bx_t, y_t)\sim Q_t$ and $Q_t(\bx|y_t) = Q_0(\bx|y_t)$;
 \STATE Learner predicts $f_t(\bx_t)$
 \STATE Learner updates the re-weighting vector: $\mathbf{p}_{t+1} = \calA(f_0, \mathbf{q}_0, D_0, \{\bx_1, \cdots, \bx_t\})$
 \STATE Learner updates the classifier: $f_{t+1} = g(\bx; f_0, \mathbf{q}_0, \mathbf{p}_{t+1})$.
 \ENDFOR
\end{algorithmic}
\caption{The re-weighting framework for online label shifting adaptation.}
\label{alg:framework_post}
\end{framework}

\textbf{Re-weighting algorithms for online label shift adaptation.} The result from \autoref{thm:reduction} allows us to approximate the expected 0-1 loss and its gradient for \textit{any} function $f$ over $Q_t$, which naturally inspires the following adaptation strategy: Choose a hypothesis space $\calG$, and at each time step $t$, estimate the expected 0-1 loss and/or its gradient and apply any online learning algorithm to select the classifier $f_{t+1} \in \calG$ for time step $t+1$.

A natural choice for $\calG$ is the hypothesis space of the original classifier $f_0$, \emph{i.e.}, $\calG = \calH$. Indeed, the unbiased estimator for the gradient $\nabla_f \ell(f; Q_t)$ in \autoref{thm:reduction} can be used to directly update $f_0$ with stochastic online gradient descent. However, as we assume that only the marginal distribution $Q_t(y)$ drifts and $Q_t(\bx|y)=Q_0(\bx|y)$, the conditional distribution $Q_t(y|\bx)$ must be a re-weighted version of $Q_0(y|\bx)$. Specifically,
\begin{equation}
    \label{eq:bayes_optimal}
    Q_t(y|\bx) = \frac{Q_t(y)}{Q_t(\bx)}Q_t(\bx|y)=\frac{Q_t(y)}{Q_t(\bx)}Q_0(\bx|y) = \frac{Q_t(y)}{Q_t(\bx)}\frac{Q_0(\bx)}{Q_0(y)}Q_0(y|\bx) \propto \frac{Q_t(y)}{Q_0(y)} Q_0(y|\bx).
\end{equation}
Since the goal of $f_t$ is to approximate this conditional distribution, 
in principle, only a re-weighting of the predicted probabilities is needed to correct for any label distribution shift. A similar insight has been previously exploited to correct for training-time label imbalance~\cite{cui2019class, NEURIPS2019_621461af, huang2016learning, wang2017learning} and for offline label shift adaptation~\cite{lipton2018detecting, azizzadenesheli2019regularized}.
We therefore focus our attention to the hypothesis space of re-weighted classifiers: $\calG(f_0, \bq_0) =\{g(\bx; f_0, \bq_0, \bp) \mid \bp \in \Delta^{M-1} \}$, 
where the classifier $g(\bx; f_0, \bq_0, \bp)$ is defined by the learned parameter vector $\bp$ and takes the following form:
\begin{equation}
    \label{eq:reweighting}
    g(\bx; f_0, \bq_0, \bp) = \argmax_{y \in \calY} \frac{1}{Z(\bx)}\frac{\bp[y]}{\bq_0[y]} P_{f_0}(\bx)[y],
\end{equation}
with $Z(\bx)=\sum_{y\in \calY}\frac{\bq_t[y]}{\bq_0[y]} P_{f_0}(\bx)[y]$ being the normalization factor. 
We specialize the online label shift adaptation framework to the hypothesis space $\calG(f_0, \bq_0)$ in Framework~\ref{alg:framework_post}, where the adaptation algorithm $\calA$ focuses on generating a re-weighting vector $\bp_t$ at each time step to construct the classifier $g(\bx; f_0, \bq_0, \bp_t)$. The online learning objective can then be re-framed as choosing the re-weighting vector $\bp_t$ that minimizes:
\begin{equation}
    \label{eq:regret_loss_pq}
    \text{Regret} = \frac{1}{T}\sum_{t=1}^T	\ell(\bp_t; \bq_t) - \inf_{\bp\in \Delta^{M-1}} \frac{1}{T}\sum_{t=1}^T \ell(\bp; \bq_t),
\end{equation}
where $\ell(\bp_t; \bq_t) := \ell(g(\bx; f_0, \bq_0, \bp_t); Q_t)$. For the remainder of this paper, we will analyze several classical online learning techniques under this framework.

\section{Online Adaptation Algorithms}
\label{sec:method}

We now describe our main algorithms for online label shift adaptation.
In particular, we present and analyze two online learning techniques---Online Gradient Descent (OGD) and Follow The History (FTH), the latter of which closely resembles Follow The Leader~\cite{shalev2011online}. We show that under mild and empirically verifiable assumptions, OGD and FTH both achieve $\calO(1/\sqrt{T})$ regret compared to the optimal fixed classifier in $\calG(f_0, \bq_0)$.

\subsection{Algorithm 1: Online Gradient Descent}
\label{sec:ogd}

Online gradient descent (OGD)~\cite{shalev2011online} is a classical online learning algorithm based on iteratively updating the hypothesis by following the gradient of the loss. Applied to our setting, the algorithm $\calA_{\sf ogd}$ computes the stochastic gradient $\nabla_\bp \ell(\bp; \hat{\bq}_t) \big|_{\bp = \bp_t} = J_\bp(\bp_t)^\top \hat{\bq}_t$ using \autoref{thm:reduction}, where
\begin{equation}
    \label{eq:grad_est}
    J_\bp(\bp_t) = \frac{\partial}{\partial \bp} \left( \mathbf{1} - {\sf diag}\left(C_{g(\cdot; f_0, \bq_0, \bp), Q_0}\right) \right) \bigg|_{\bp = \bp_t}
\end{equation}
denotes the $M \times M$ Jacobian with respect to $\bp$. OGD then applies the following update:
\begin{equation}
\label{eq:ogd}
\bp_{t+1} = {\sf Proj}_{\Delta^{M-1}} \left( \bp_t - \eta \cdot \nabla_{\bp} \ell(\bp; \hat{\bq}_t) \large|_{\bp = \bp_t} \right),
\end{equation}
where $\eta > 0$ is the learning rate, and ${\sf Proj}_{\Delta^{M-1}}$ projects the updated vector onto $\Delta^{M-1}$.

The convergence rate of $\calA_{\sf ogd}$ depends on properties of the loss function $\ell(\bp; \bq)$. We empirically observe that $\ell(\bp; \bq)$ is \emph{approximately} convex in the re-weighting parameter vector $\bp$, which we justify in detail in the appendix. To derive meaningful regret bounds for OGD, we further assume that the loss function $\ell(\bp; \bq)$ is Lipschitz with respect to $\bp$. Formally:
\begin{assumption}[Convexity]
\label{hyp:convex}
$\forall \bq \in \Delta^{M-1} $, $\ell(\bp; \bq)$ is convex in $\bp$.	
\end{assumption}
\begin{assumption}[Lipschitz-ness]
\label{hyp:liptschitz_ogd}
 $\sup_{\bp \in \Delta^{M-1}, i=1, \cdots, M} \left\lVert\nabla_\bp \ell\left(\bp; \left(C^{\top}_{f_0, Q_0}\right)^{-1}\be_i \right) \right\rVert_2$ is finite.
\end{assumption}
Below, we provide our regret bound for OGD, which guarantees a $\calO(1/\sqrt{T})$ convergence rate. The proof follows the classical proof for online gradient descent and is given in the appendix.
\begin{theorem}[Regret bound for OGD.]
\label{thm:ogd}
Under Assumption \ref{hyp:differentiability}, \ref{hyp:convex} and \ref{hyp:liptschitz_ogd}, let $L = \sup_{\bp \in \Delta^{M-1}, i=1, \cdots, M} \left\lVert\nabla_\bp \ell\left(\bp; \left(C^{\top}_{f_0, Q_0}\right)^{-1}\be_i \right) \right\rVert_2$. If $\eta =\sqrt{\frac{2}{T}} \frac{1}{L}$ then $\calA_{\sf ogd}$ satisfies:
$$\mathbb{E}_{(\bx_t, y_t) \sim Q_t}\left[\frac{1}{T}\sum_{t=1}^T \ell(\bp_t; \bq_t)\right] -  \inf_{\bp \in \Delta^{M-1}} \frac{1}{T}\sum_{t=1}^T \ell(\bp; \bq_t) \leq \sqrt{\frac{2}{T}}L.$$
\end{theorem}
  \begin{wrapfigure}{r}{0.5\textwidth}
  \vspace{-5ex}
    \begin{minipage}{0.5\textwidth}
        \begin{algorithm}[H]
        \textbf{Input:} $\bp, \bq, \delta, k$
        \begin{algorithmic}[1]
        \FOR{$i = 1,\ldots,M$}
        \STATE $\Delta_i := \ell(\bp + j\delta\cdot \be_i, \bq) - \ell(\bp - j\delta\cdot \be_i, \bq)$
        \STATE $\hat{\nabla}_{\bp}[i] := \sum_{j=1}^k \alpha_j \Delta_i / (2\delta j)$, where $\alpha_j=2\cdot(-1)^{j+1}\binom{k}{k-j}/\binom{k+j}{k}$.
        \ENDFOR
        \STATE \textbf{Return:} $\hat{\nabla}_{\bp}$
        \end{algorithmic}
        \caption{Gradient estimator for $\nabla_\bp \ell(\bp; \bq)$}
        \label{alg:l_grad_estimate}
        \end{algorithm}
    \end{minipage}
    \vspace{-2ex}
  \end{wrapfigure}

\textbf{Gradient estimation.} Computing the unbiased gradient estimator $\nabla_{\bp} \ell(\bp; \hat{\bq}_t) \large|_{\bp = \bp_t}$ involves the Jacobian term $J_\bp(\bp_t)$ in \autoref{eq:grad_est}, which is discontinous when estimated using the hold-out set $D_0$. More precisely, each entry of $\mathbf{1} - {\sf diag}\left(C_{g(\cdot; f_0, \bq_0, \bp), Q_0}\right)$ is the expected 0-1 loss for a particular class, whose estimate using $D_0$ is a step function. This means that taking the derivative naively will result in a gradient value of $0$. To circumvent this issue, we apply finite difference approximation~\cite{ahnert2007numerical} for computing $\nabla_{\bp} \ell(\bp; \hat{\bq}_t) \large|_{\bp = \bp_t}$, which is detailed in Algorithm \ref{alg:l_grad_estimate}. We also apply smoothing to compute the average estimated gradient around the target point $\bp_t$ to improve gradient stability.

Alternatively, we can minimize a smooth surrogate of the 0-1 loss that enables direct gradient computation. In detail, we define $\ell^{\sf prob}(f; Q) := \mathbb{E}_{(\bx, y) \sim Q} [1 - P_f(\bx)[y]] \in [0,1]$ so that $\ell^{\sf prob} = \ell$ when $P_f$ outputs one-hot probability vectors. Furthermore, we show that $\ell^{\sf prob}$ enjoys the same unbiased estimation properties as that of $\ell$ in \autoref{thm:reduction}, admits smooth gradient estimates using a finite hold-out set $D_0$, and is classification-calibrated in the sense of Tewari and Bartlett~\cite{tewari2007consistency}. The formal statement and proof of the above properties are given in the appendix. In \autoref{sec:exp} we empirically evaluate OGD using both the finite difference approach and the surrogate loss approach for gradient estimation.

\subsection{Algorithm 2: Follow The History}
\label{sec:fth}

Next, we describe Follow The History (FTH)---a minor variant of the prominent online learning strategy known as Follow The Leader (FTL)~\cite{shalev2011online}. In FTL, the basic intuition is that the predictor $f_t$ for time step $t$ is the one that minimizes the average loss from the previous $t-1$ time steps. Formally:
\begin{equation}
    \label{eq:ftl}
    \bp_{t+1} = \argmin_{\bp \in \Delta^{M-1}} \frac{1}{t} \sum_{\tau=1}^{t} \ell\left(\bp; \hat{\bq}_\tau\right) =  \argmin_{\bp\in \Delta^{M-1}} \ell\left(\bp; \frac{1}{t}\sum_{\tau=1}^{t}\hat{\bq}_\tau\right),
\end{equation}
where the second inequality holds by linearity of $\ell$. 
However, faithfully executing FTL requires optimizing the loss in \autoref{eq:ftl} at each time step, which could be very inefficient since multiple gradients of $\ell$ need to be computed as opposed to a single gradient computation for OGD.

To address this efficiency concern, observe that if the original classifier $f_0$ is Bayes optimal, then for any $\bq_t \in \Delta^{M-1}$, the minimizer over the re-weighting vector $\bp$ of $\ell(\bp; \bq_t)$ is the test-time label marginal probability vector $\bq_t$ itself (\emph{cf.} \autoref{eq:bayes_optimal}).
In fact, we show in the appendix that this assumption often holds \emph{approximately} in practice, especially when $f_0$ achieves a low error on $Q_0$ and is well-calibrated~\cite{zadrozny2001obtaining, niculescu2005predicting, guo2017calibration}. 
Assuming this approximation error is bounded by some $\delta \geq 0$, we can derive a more efficient update rule and a corresponding regret bound. Formally:
\begin{assumption}[Symmetric optimality]
\label{hyp:sym_opt}
For any $\bq \in \Delta^{M-1}$, $\|\bq - \argmin_{\bp\in\Delta^{M-1}} \ell(\bp; \bq) \|_2 \leq \delta$.
\end{assumption}
We define the more efficient Follow The History update rule $\calA^{\sf fth}$ as: $\bp_{t+1} = \frac{1}{t}\sum_{\tau=1}^{t}\hat{\bq}_\tau$, which is a simple average of the estimates $\hat{\bq}_\tau$ from all previous iterations of the algorithm.
FTH coincides with FTL when Assumption \ref{hyp:sym_opt} holds with $\delta = 0$. 
In the following theorem, we derive the regret bound for FTH when $\delta = 0$ but prove the general case of $\delta \geq 0$ in the appendix. The theorem relies on an assumption of Lipschitz-ness that is slightly different from Assumption \ref{hyp:liptschitz_ogd}. We formally state them as below:
\begin{assumption}[Lipschitz-ness for FTH]
\label{hyp:liptschitz_ftl}
 $\sup_{\bp, \bq \in \Delta^{M-1}} \left\lVert\nabla_\bp \ell\left(\bp; \bq \right) \right\rVert_2$ is finite.
\end{assumption}
\begin{theorem}
\label{thm:fth}
	Under Assumption \ref{hyp:sym_opt} and \ref{hyp:liptschitz_ftl} with $\delta = 0$, with probability at least $1  - 2MT^{-7}$ over samples $(\bx_t, y_t) \sim Q_t$ for $t=1,\ldots,T$ we have that $\calA^{\sf fth}$ satisfies:
	$$\frac{1}{T}\sum_{t=1}^T \ell(\bp_t; \bq_t) - \inf_{\bp\in \Delta^{M-1}}\sum_{t=1}^T \ell(\bp; \bq_t) \leq 2L \frac{\ln T}{T} + 4Lc \sqrt{\frac{M \ln T}{T}},$$
	where $c = 2\max_{i=1,\ldots,M}\left\lVert \left( C_{f_0, Q_0}^{\top} \right)^{-1}\mathbf{e}_{i}\right\rVert_{\infty}$ and $L = \sup_{\bp, \bq\in \Delta^{M-1}}\left\lVert\nabla_{\bp} \ell(\bp; \bq)\right\rVert_2$.
\end{theorem}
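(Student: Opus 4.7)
The plan is to apply the classical Follow-The-Leader (FTL) ``be-the-leader'' lemma to the \emph{unobservable} true-loss sequence $\ell(\bp;\bq_t)$ (which defines the regret), and then separately control the gap between the FTH iterates $\bp_t = \hat{\bar\bq}_{t-1}$ and the corresponding idealized FTL iterates $\tilde\bp_t := \bar\bq_{t-1}$ via martingale concentration, where I write $\bar\bq_t := \tfrac{1}{t}\sum_{\tau\leq t}\bq_\tau$ and $\hat{\bar\bq}_t := \tfrac{1}{t}\sum_{\tau\leq t}\hat\bq_\tau$. To set up, $\ell(\bp;\bq)$ is linear in $\bq$ by \autoref{eq:loss_conf}, so $\inf_\bp \tfrac{1}{T}\sum_t \ell(\bp;\bq_t) = \inf_\bp \ell(\bp;\bar\bq_T)$. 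Under Assumption~\ref{hyp:sym_opt} with $\delta=0$ the infimum is attained at $\bp^\star = \bar\bq_T$, so the regret simplifies to $\tfrac{1}{T}\sum_t [\ell(\bp_t;\bq_t) - \ell(\bp^\star;\bq_t)]$.

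Next, introduce the idealized predictor $\tilde\bp_t := \bar\bq_{t-1}$; by linearity and Assumption~\ref{hyp:sym_opt}, this is the FTL prediction at time $t$ with respect to the true-loss sequence $\ell(\cdot;\bq_\tau)$. I would then decompose
\[
R_T = \underbrace{\tfrac{1}{T}\sum_{t=1}^T\bigl[\ell(\bp_t;\bq_t) - \ell(\tilde\bp_t;\bq_t)\bigr]}_{(A)\ \text{estimation gap}} \; + \; \underbrace{\tfrac{1}{T}\sum_{t=1}^T\bigl[\ell(\tilde\bp_t;\bq_t) - \ell(\bp^\star;\bq_t)\bigr]}_{(B)\ \text{FTL regret}}.
\]
For (B), the inductive be-the-leader argument applied to $\ell(\cdot;\bq_t)$ yields $\sum_t \ell(\tilde\bp_{t+1};\bq_t) \leq \sum_t \ell(\bp^\star;\bq_t)$, so $(B) \leq \tfrac{L}{T}\sum_t \|\tilde\bp_t - \tilde\bp_{t+1}\|_2$ by Lipschitzness (legal because $\bq_t \in \Delta^{M-1}$). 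The identity $\tilde\bp_{t+1} - \tilde\bp_t = \bar\bq_t - \bar\bq_{t-1} = \tfrac{1}{t}(\bq_t - \bar\bq_{t-1})$ together with $\|\bq_t - \bar\bq_{t-1}\|_2 \leq \sqrt{2}$ then gives $(B) \leq \tfrac{\sqrt{2}L}{T}\sum_t \tfrac{1}{t} \leq \tfrac{2L\ln T}{T}$.

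For (A), Lipschitzness again bounds each summand by $L\|\bp_t - \tilde\bp_t\|_2 = L\|\hat{\bar\bq}_{t-1} - \bar\bq_{t-1}\|_2$. By \autoref{thm:reduction}, $\{\hat\bq_\tau - \bq_\tau\}_\tau$ is a martingale-difference sequence adapted to the filtration generated by $\bx_1,\ldots,\bx_\tau$, whose coordinatewise deviation is bounded in $\ell_\infty$ by $c$ (using $\|\hat\bq_\tau\|_\infty \leq c/2$ from the definition of $c$). Applying Hoeffding-Azuma separately to each coordinate and each prefix of length $t-1$, then taking a union bound over $M$ coordinates and $T$ time steps with per-event failure probability $O(T^{-8})$, gives $\|\hat{\bar\bq}_{t-1} - \bar\bq_{t-1}\|_\infty \leq 4c\sqrt{\ln T /(t-1)}$ uniformly in $t$ with probability at least $1 - 2MT^{-7}$. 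Converting to $\ell_2$ via $\|\cdot\|_2 \leq \sqrt{M}\|\cdot\|_\infty$ and using $\sum_{t=2}^T (t-1)^{-1/2} = O(\sqrt{T})$ produces the $4Lc\sqrt{M\ln T/T}$ term.

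The main conceptual obstacle is the choice to apply be-the-leader to the \emph{unobservable} true-loss sequence rather than to the empirical losses $\ell(\cdot;\hat\bq_\tau)$ that the algorithm actually uses. The latter would force us to control a stability term for $\ell(\cdot;\hat\bq_t)$ with $\hat\bq_t$ lying \emph{outside} the simplex, whereas the definition of $L$ only guarantees Lipschitzness of $\ell(\cdot;\bq)$ for $\bq \in \Delta^{M-1}$. Routing the argument through the idealized predictor $\tilde\bp_t$ keeps every use of Lipschitzness indexed by a simplex point and cleanly factors the analysis into a deterministic stability bound (the $\ln T/T$ term) and a stochastic concentration bound (the $\sqrt{M\ln T/T}$ term). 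The only quantitative subtlety is obtaining $c\sqrt{M}$ rather than $Mc$ in the concentration step, which is why Hoeffding is applied coordinatewise and the $\ell_2$-norm is recovered at the end via $\sqrt{M}\|\cdot\|_\infty$.
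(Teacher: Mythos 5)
Your proof is correct and follows essentially the same route as the paper's: the identical decomposition into an estimation gap (controlled by coordinatewise Hoeffding plus a union bound over $M$ coordinates and $T$ steps, converted to $\ell_2$ via $\sqrt{M}\,\|\cdot\|_\infty$) and an idealized FTL regret on the true losses (controlled by be-the-leader plus the $O(1/t)$ stability of running averages), yielding the $2L\ln T/T$ and $4Lc\sqrt{M\ln T/T}$ terms respectively. The only blemish is a factor-of-two slip in the intermediate concentration radius (it should be $2c\sqrt{\ln T/(t-1)}$ rather than $4c\sqrt{\ln T/(t-1)}$ to recover the stated constant), which does not affect the structure or validity of the argument.
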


\section{Experiment}
\label{sec:exp}
In this section, we empirically evaluate the proposed algorithms on datasets with both simulated and real world online label shifts. Our simulated label shift experiment is performed on CIFAR-10~\cite{krizhevsky2009learning}, where we vary the shift process and explore the robustness of different algorithms. For real world label shift, we evaluate on the ArXiv dataset\footnote{\url{https://www.kaggle.com/Cornell-University/arxiv}} for paper categorization, where label shift occurs naturally over years of paper submission due to changing interest in different academic disciplines.

\subsection{Experiment set-up}
\textbf{Online algorithms set-up.}
We evaluate both the OGD and FTH algorithms from \autoref{sec:method}. For OGD, we use the learning rate $\eta=\sqrt{\frac{2}{T}} \frac{1}{L}$ suggested by Theorem~\ref{thm:ogd}, where $L$ is estimated by taking the maximum over $\{\be_y:y\in \calY\}$ for 100 vectors $\bp'$ uniformly sampled from $\Delta^{M-1}$. The gradient estimate can be derived using either the finite difference method in Algorithm \ref{alg:l_grad_estimate} or by directly differentiating the surrogate loss $\ell^{\sf prob}$. We evaluate both methods in our experiments.

For FTH, we evaluate both the algorithm $\calA^{\sf fth}$ defined in \autoref{sec:fth} and a heuristic algorithm $\calA^{\sf ftfwh}$ which we call \emph{Follow The Fixed Window History (FTFWH)}. Different from FTH where $\bp_{t+1}$ is the simple average of $\hat{\bq}_\tau$ across all previous time steps $\tau = 1,\ldots,t$, FTFWH averages across previous estimates $\hat{\bq}_\tau$ in a fixed window of size $w$, \emph{i.e.}, $\bp_{t+1} =  \sum_{\tau=\max\{1, t-w+1\}}^{t}\hat{\bq}_\tau / \min\{w, t\}$. Intuitively, FTFWH assumes that the distribution $Q_t$ as fixed for $w$ time steps and solves the offline label adaptation problem for the next time step. We use three different window lengths $w=100, 1000, 10000$ in our experiments. We will show that FTFWH can be optimal at times but is inconsistent in performance, especially when the window size $w$ coincides with the periodicity in the label distribution shift.

\textbf{Baselines.} In addition, we consider the following baseline classifiers as benchmarks against online adaptation algorithms.
\begin{itemize}[leftmargin=*,nosep]
\item \emph{Base Classifier (BC)} refers to the classifier $f_0$ without any online adaptation, which serves as a reference point for evaluating the performance of online adaptation algorithms.
\item \emph{Optimal Fixed Classifier (OFC)} refers to the best-in-class classifier in $\calG(f_0, \bq_0)$, which is the optimum in \autoref{eq:regret_loss_pq}. We denote the re-weighting vector that achieves this optimum as $\bp^{\sf opt}$.
In simulated label shift, we can define the ground truth label marginal probability vector $\bq_t$ and optimize for $\bp^{\sf opt}$ directly.
For the experiment on ArXiv, we derive the optimum using the empirical loss:
$\bp^{\sf opt} = \argmin_{\bp \in \Delta^{M-1}} \ell \left(\bp; \frac{1}{T}\sum_{t=1}^{T}\be_{y_t}\right)$ where $y_t$ is the ground truth label at time $t$.
Note that OFC is not a practical algorithm since $y_t$ is not observed, but it can be used to benchmark different adaptation algorithms and estimate their empirical regret.
\end{itemize}

\textbf{Evaluation metric.} Computing the actual regret requires access to $\bq_t$ for all $t$, which we do not observe in the real world dataset. To make all evaluations consistent, we report the \textbf{average error} $\frac{1}{T}\sum_{t=1}^T \mathds{1}\left( g(\bx_t; f_0, \bq_0, \bp_t) \neq y_t \right)$ to approximate $\frac{1}{T}\sum_{t=1}^T \ell(\bp_t; \bq_t)$. This approximation is valid 
for large $T$ due to its exponential concentration rate by the Azuma–Hoeffding inequality.

\subsection{Evaluation on CIFAR-10 under simulated shift}
\label{sec:exp_simulated}
\textbf{Dataset and model.} 
We conduct our simulated label shift experiments on CIFAR-10~\cite{krizhevsky2009learning} with a ResNet-18~\cite{he2016deep} classifier. 
We divide the original training set into train and validation by a ratio of $3:2$. The training set is used to train the base model $f_0$, and the validation set $D_0$ is used for both temperature scaling calibration~\cite{guo2017calibration} and to estimate the confusion matrix. The original test set is for the online data sampling. Additional training details are provided in the appendix.

\textbf{Simulated shift processes.} Let $\bq^{(1)}, \bq^{(2)} \in \Delta^{M-1}$ be two fixed probability vectors. We define the following simulated shift processes for the test-time label marginal probability vector $\bq_t$. 
\begin{itemize}[leftmargin=*,nosep]
    \item \textit{Constant shift}: $\bq_t = \bq^{(1)}$ for all $t$, which coincides with the setting of offline label shift.
	\item \textit{Monotone shift}: $\bq_t$ interpolates from $\bq^{(1)}$ to $\bq^{(2)}$, \emph{i.e.}, $\bq_t := \left(1 - \frac{t}{T}\right) \bq^{(1)} + \left(\frac{t}{T}\right) \bq^{(2)}$.
	\item \textit{Periodic shift}: $\bq_t$ alternates between $\bq^{(1)}$ and $\bq^{(2)}$ at a fixed period of $T_p$.
		We test under three different periods $T_p=100, 1000, 10000$.
	\item \textit{Exponential periodic shift}: $\bq_t$ alternates between $\bq^{(1)}$ and $\bq^{(2)}$ with an exponentially growing period. Formally, $\forall t\in [k^{2i}, k^{2i+1}]$, $\bq_t := \bq^{(1)}$; $\forall t\in [k^{2i-1}, k^{2i}]$, $\bq_t := \bq^{(2)}$. We use $k=2, 5$ for our experiments.
\end{itemize}
In our experiments, $\bq^{(1)}$ and $\bq^{(2)}$ are defined to concentrate on the \emph{dog} and \emph{cat} classes, respectively. That is, $\bq^{(1)}[\text{dog}] = 0.55$ and $\bq^{(1)}[y] = 0.05$ for all other classes $y$, and similarly for $\bq^{(2)}$. The end time $T$ is set to $100,000$ for all simulation experiments. All results are repeated using three different random seeds that randomize the samples drawn at each time step $t$.

\begin{table}[!t]
    \centering
    \resizebox{\linewidth}{!}{
        \begin{tabular}{cc|c|c|ccc|cc}
        \toprule
             \multicolumn{2}{c|}{\multirow{3}{*}{\textbf{Method}}} & \multicolumn{7}{c}{\textbf{Simulated Label Shift}} \\
             \cmidrule{3-9}
             & & \multirow{2}{*}{Constant} & \multirow{2}{*}{Monotone} & \multicolumn{3}{c|}{Periodic} & \multicolumn{2}{c}{Exp. Periodic} \\
             & &  &  & $T_p=100$ & $T_p=1000$ & $T_p=10000$ & $k=2$ & $k=5$\\
        \midrule
        \multicolumn{2}{c|}{Base Classifier ($f_0$)} & $     12.43 \scalebox{0.8}{$\pm$      0.04}$ & $     11.63 \scalebox{0.8}{$\pm$      0.08}$ & $     11.63 \scalebox{0.8}{$\pm$      0.09}$ & $     11.62 \scalebox{0.8}{$\pm$      0.08}$ & $     11.63 \scalebox{0.8}{$\pm$      0.10}$ & $     11.67 \scalebox{0.8}{$\pm$      0.07}$ & $     11.81 \scalebox{0.8}{$\pm$      0.11}$ \\
        \multicolumn{2}{c|}{Opt. Fixed Classifier} & $      7.78 \scalebox{0.8}{$\pm$      0.10}$ & $     10.24 \scalebox{0.8}{$\pm$      0.08}$ & $     10.25 \scalebox{0.8}{$\pm$      0.08}$ & $     10.24 \scalebox{0.8}{$\pm$      0.08}$ & $     10.24 \scalebox{0.8}{$\pm$      0.08}$ & $     10.27 \scalebox{0.8}{$\pm$      0.07}$ & $     10.25 \scalebox{0.8}{$\pm$      0.09}$ \\
        \midrule
        \multicolumn{2}{c|}{FTH} & $      7.68 \scalebox{0.8}{$\pm$      0.11}$ & $     10.36 \scalebox{0.8}{$\pm$      0.06}$ & $     10.27 \scalebox{0.8}{$\pm$      0.08}$ & $     10.25 \scalebox{0.8}{$\pm$      0.06}$ & $     10.33 \scalebox{0.8}{$\pm$      0.10}$ & $     10.23 \scalebox{0.8}{$\pm$      0.06}$ & $     10.25 \scalebox{0.8}{$\pm$      0.04}$ \\
        \midrule
        \multirow{3}{*}{FTFWH} & $w=10^2$ & $      \color{red}{8.71 \scalebox{0.8}{$\pm$      0.10}}$ & $     10.19 \scalebox{0.8}{$\pm$      0.05}$ & $     \color{red}{12.15 \scalebox{0.8}{$\pm$      0.01}}$ & $      \color{blue}{8.85 \scalebox{0.8}{$\pm$      0.04}}$ & $      \color{blue}{8.46 \scalebox{0.8}{$\pm$      0.11}}$ & $      \color{blue}{8.52 \scalebox{0.8}{$\pm$      0.06}}$ & $      \color{blue}{8.54 \scalebox{0.8}{$\pm$      0.05}}$\\
         & $w=10^3$ & $      7.74 \scalebox{0.8}{$\pm$      0.07}$ & $      \color{blue}{9.52 \scalebox{0.8}{$\pm$      0.10}}$ & $     10.25 \scalebox{0.8}{$\pm$      0.07}$ & $     \color{red}{11.16 \scalebox{0.8}{$\pm$      0.12}}$ & $      \color{blue}{7.84 \scalebox{0.8}{$\pm$      0.09}}$ & $      \color{blue}{7.77 \scalebox{0.8}{$\pm$      0.06}}$ & $      \color{blue}{7.67 \scalebox{0.8}{$\pm$      0.08}}$\\
         & $w=10^4$ & $      7.67 \scalebox{0.8}{$\pm$      0.08}$ & $     \color{blue}{ 9.53 \scalebox{0.8}{$\pm$      0.10}}$ & $     10.26 \scalebox{0.8}{$\pm$      0.07}$ & $     10.26 \scalebox{0.8}{$\pm$      0.06}$ & $     \color{red}{10.83 \scalebox{0.8}{$\pm$      0.07}}$ & $      \color{blue}{8.93 \scalebox{0.8}{$\pm$      0.07}}$ & $      \color{blue}{8.46 \scalebox{0.8}{$\pm$      0.07}}$\\
        \midrule
        \multicolumn{2}{c|}{OGD (finite diff.)} & $      8.08 \scalebox{0.8}{$\pm$      0.08}$ & $      9.79 \scalebox{0.8}{$\pm$      0.09}$ & $     10.71 \scalebox{0.8}{$\pm$      0.10}$ & $     10.62 \scalebox{0.8}{$\pm$      0.09}$ & $     10.11 \scalebox{0.8}{$\pm$      0.06}$ & $      \color{blue}{8.99 \scalebox{0.8}{$\pm$      0.05}}$ & $      \color{blue}{8.56 \scalebox{0.8}{$\pm$      0.12}}$ \\
        \multicolumn{2}{c|}{OGD (surrogate loss)} & $      7.78 \scalebox{0.8}{$\pm$      0.11}$ & $      9.75 \scalebox{0.8}{$\pm$      0.07}$ & $     10.24 \scalebox{0.8}{$\pm$      0.06}$ & $     10.21 \scalebox{0.8}{$\pm$      0.07}$ & $     10.05 \scalebox{0.8}{$\pm$      0.09}$ & $      \color{blue}{8.92 \scalebox{0.8}{$\pm$      0.04}}$ & $      \color{blue}{8.50 \scalebox{0.8}{$\pm$      0.11}}$ \\
        \bottomrule
        \end{tabular}
        }
    \caption{\label{tab:sim_res} Average error ($\%$) for different adaptation algorithms under simulated label shift on CIFAR-10. Standard deviation is computed across three runs. Results that are better than the OFC benchmark by $0.5$ or more are highlighted in {\color{blue}{blue}}, and results that are worse than the OFC benchmark by $0.5$ or more are highlighted in {\color{red}{red}}. 
    }
    \vspace{-2ex}
\end{table}

\textbf{Results.} 
\autoref{tab:sim_res} shows the average error of various adaptation algorithms when applied to the simulated label shift processes. All adaptation algorithms can outperform the base classifier $f_0$ (except for FTFWH for periodic shift with $T_p=100$), which serves as a sanity check that the algorithm is indeed adapting to the test distribution.

Comparison with the optimal fixed classifier (OFC) reveals more insightful characteristics of the different algorithms and we discuss each algorithm separately as follows.
\begin{itemize}[leftmargin=*,nosep]
\item The performance of \emph{Follow The History (FTH)} is guaranteed to be competitive with \emph{OFC} by Theorem \ref{thm:fth} when the base model $f_0$ is well-calibrated. Indeed, as shown in the table, the average error for FTH is close to that of \emph{OFC} for all shift processes. However, it is also very conservative and can never achieve better performance than \emph{OFC} by a margin larger than 0.5. 
\item \emph{Follow The Fixed Window History (FTFWH)} with a suitably chosen window size performs very well empirically, especially for constant shift, monotone shift, and exponential periodic shift. 
However, when encountering periodic shift with the periodicity $T_p$ equal to the window size $w$ (highlighted in red), FTFWH is consistently subpar compared to OFC, and sometimes even worse than the base classifier $f_0$. Since real world distribution shifts are often periodic in nature (\emph{e.g.}, seasonal trends for flu and hay fever), this result suggests that deploying FTFWH may require knowledge of the periodicity in advance, which may not be feasible. In fact, we show in our experiment on real world label shift on ArXiv that FTFWH is never better than FTH and OGD.

\item \emph{Online gradient descent (OGD)} with learning rate $\eta=\sqrt{\frac{2}{T}}\frac{1}{L}$ is also guaranteed to be as good as \emph{OFC} by Theorem \ref{thm:ogd}, which is empirically observed as well. Moreover, unlike \emph{FTH} which only achieves an average error no worse than that of \emph{OFC}, \emph{OGD} is able to outperform \emph{OFC} on certain scenarios such as monotone shift, periodic shift with $T_p=10000$ and exponential periodic shift with $K=2,5$. We also observe that OGD using the surrogate loss for gradient estimation is consistently better than OGD with finite difference gradient estimation.
\end{itemize}
Overall, we observe that \emph{OGD} is the most reliable adaptation algorithm from the above simulation results, as it is uniformly as good as \emph{OFC} and sometimes can achieve even better results than \emph{OFC}.

\subsection{Evaluation on ArXiv under real world distribution shift}
\label{sec:exp_realworld}
\textbf{Dataset and model.} 
We experiment on the ArXiv dataset for categorization of papers from the Computer Science domain into 23 refined categories\footnote{There are actually 40 categories in the Computer Science domain, from which we select the 23 most populated categories.}. There are a total of $233,748$ papers spanning from the year 1991 to 2020, from which we sort by submission time and divide by a ratio of $2:1:1$ into the training, validation, and test sets. For each paper, we compute the tf-idf vector of its abstract as the feature $\bx$, and we use the first category in its primary category set as the true label $y$. 
The base model $f_0$ is an $L_2$-regularized multinomial regressor. Same as in the simulated shift experiments, the validation set $D_0$ is used to calibrate the base model $f_0$ and estimate the confusion matrix.
Additional details on data processing and training are given in the appendix.

\begin{table}[!t]
    \centering
    \resizebox{\textwidth}{!}{  
    \begin{tabular}{c|cc|c|ccc|cc}
    \toprule
         \textbf{Method} & Base ($f_0$) & Opt. Fixed & FTH & \multicolumn{3}{c|}{FTFWH} & \multicolumn{2}{c}{OGD}  \\
         &  &  &  & $w=10^2$ & $w=10^3$ & $w=10^4$ & finite diff. & surr. loss \\    \midrule
    \textbf{Avg. Error} ($\%$) & 27.21 & 25.56 & 25.62 & 30.14 & 26.09 & 25.87 & {\bf 25.52} & 25.70 \\
    \bottomrule
    \end{tabular}
    }
    \caption{\label{tab:arxiv_res} Average test error ($\%$) on the ArXiv dataset. 
    }
    \vspace{-4ex}
\end{table}

\textbf{Results.} 
\autoref{tab:arxiv_res} shows the average error for each adaptation algorithm over the test set, which consists of papers sorted by submission time with end time $T=58,437$.
In contrast to the simulated shift experiments in \autoref{sec:exp_simulated}, FTFWH is consistently worse than the optimal fixed classifier (OFC), especially for window size $w=100$ where it is even worse than the base classifier $f_0$. This result shows that despite the good performance of FTFWH for simulated label shifts on CIFAR-10, it encounters significant challenges when deployed to the real world.

On the other hand, FTH and OGD both achieve an average error close to that of OFC, which again validates the theoretical regret bounds in \autoref{thm:ogd} and \autoref{thm:fth}. Given the empirical observation of OGD's performance on both simulated and real world label shift, as well as its conceptual simplicity and ease of implementation, we therefore recommend it as a practical go-to solution for online label shift adaptation in real world settings.
\section{Related Work}

Label shift adaptation has received much attention lately. The seminal work by \citet{saerens2002adjusting} made the critical observation that the label shift condition of $p(\bx | y)$ being stationary implies the optimality of re-weighted classifiers. They defined an Expectation-Maximization (EM) procedure to utilize this insight and learn the re-weighting vector that maximizes likelihood on the test distribution. \citet{alexandari2020maximum} studied this approach further and discovered that calibrating the model can lead to a significant improvement in performance. Another prominent strategy for learning the re-weighting vector is by inverting the confusion matrix~\cite{lipton2018detecting}, which inspired our approach for the online setting. Extensions to this method include regularizing the re-weighting vector~\cite{azizzadenesheli2019regularized}, generalizing the label shift condition to feature space rather than input space~\cite{combes2020domain}, and unifying the confusion matrix approach and the EM approach~\cite{garg2020unified}.

Another type of test-time distribution shift that has been widely studied is \emph{covariate shift}. Differing from the label shift assumption that $p(\bx | y)$ is constant, covariate shift assumes instead that $p(y|\bx)$ is a constant and $p(\bx)$ changes between training and test distributions. Earlier work by \citet{lin2002support} relied on the assumption that the density function $p(\bx)$ is known, while subsequent work relaxed this assumption by estimating the density from data~\cite{shimodaira2000improving, zadrozny2004learning, huang2006correcting, gretton2009covariate}. Online extensions to the covariate shift problem have also been considered. Solutions to this problem either rely on the knowledge of test labels~\cite{jain2011online}, or use unsupervised test-time adaptation methods that are tailored to visual applications~\cite{hoffman2014continuous, mullapudi2019online, sun2020test}.

More broadly, both label shift adaptation and covariate shift adaptation can be categorized under the general problem of \emph{domain adaptation}. This problem is much more challenging because a test sample may not necessarily belong to the support of the training distribution. In this setting, there is a large line of theoretical work that prove performance guarantees under bounded training and test distribution divergence~\cite{ben2007analysis, mansour2009domain, hoffman2018algorithms, shen2018wasserstein}, as well as empirical methods for domain adaptation in the realm of deep learning~\cite{rozantsev2019beyond, sun2016return, kang2019contrastive, damodaran2018deepjdot}.
\section{Conclusion}
We presented a rigorous framework for studying online label shift adaptation, which addresses limitations in prior work on offline label shift. Under our framework, we showed that it is possible to obtain unbiased estimates of the expected 0-1 loss and its gradient without observing a single label at test time. This reduction enables the application of classical techniques from online learning to define practical adaptation algorithms. We showed that these algorithms admit rigorous theoretical guarantees on performance, while at the same time perform very well empirically and can adapt to a variety of challenging label shift scenarios.

One potential future work is to relax Assumption \ref{hyp:convex} to weak convexity assumption. With this relaxation, one needs to take a closer look to online learning techniques and applies it into the online label shift problem with the reduction introduced in this paper and potential additional reduction. Another extension is that we focused on re-weighting algorithms for online label shift adaptation in this paper, the reduction presented in \autoref{sec:framework} technically enables a larger set of solutions. Indeed, one possible future direction is to directly apply OGD to update the base classifier $f_0$, which may further improve the adaptation algorithm's performance.

\begin{ack}
RW and KQW are supported by grants from the National Science Foundation NSF (III-1618134, III-1526012, IIS-1149882, and IIS-1724282), the Bill and Melinda Gates Foundation, and the Cornell Center for Materials Research with funding from the NSF MRSEC program (DMR-1719875), and SAP America.
\end{ack}

\bibliographystyle{plainnat}
\bibliography{main.bib}


\newpage
\appendix
\setcounter{theorem}{0}
\section{Proof in Section \ref{sec:method}}
\label{sec:apd_proof}

\begin{theorem}
Let $f$ be any classifier and let $f_0$ be the classifier trained on data from $Q_0$. Suppose that $f_0$ predicts $f_0(\bx_t) = i$ on input $\bx_t \sim Q_t$ and let $\be_i$ denote the one-hot vector whose non-zero entry is $i$. If the confusion matrix $C_{f_0, Q_0}$ is invertible then $\hat{\bq}_t = \left( C_{f_0, Q_0}^\top \right)^{-1} \be_i$ is an unbiased estimator of the label marginal probability vector $\bq_t$. Further, we obtain unbiased estimators of the loss and gradient of $f$ for $Q_t$ with Assumption \ref{hyp:differentiability}:
\begin{align*}
    \ell(f; Q_t)&=\mathbb{E}_{Q_t} \left[ \left\langle \mathbf{1} - {\sf diag}\left(C_{f, Q_0}\right), \hat{\bq}_t \right\rangle \right] , \\
    \nabla_f \ell(f; Q_t)&=\mathbb{E}_{Q_t} \left[ J_f^\top \hat{\bq}_t  \right] ,
\end{align*}
where $J_f = \frac{\partial}{\partial f} \left[ \mathbf{1} - {\sf diag}\left(C_{f, Q_0}\right) \right]$ denotes the Jacobian of $\mathbf{1} - {\sf diag}\left(C_{f, Q_0}\right)$ with respect to $f$.
\end{theorem}

\begin{proof}[Proof of Theorem \ref{thm:reduction}.]
    Let $\boldsymbol{\alpha}[i] = \mathbb{P}_{(\bx_t, y_t) \sim Q_t}( f_0(\bx_t) = i )$ for $i=1,\ldots,M$ be the proportion of samples drawn from $Q_t$ that the model $f_0$ predicts as class $i$. Then:
    \begin{equation}
        \label{eq:confusion_inv}
        \boldsymbol{\alpha}[i] = \sum_{j=1}^M \mathbb{P}_{\bx_t \sim Q_t(\cdot | y_t = j)}( f_0(\bx_t) = i ) \cdot \bq_t[j] = C_{f_0, Q_t}[:, i]^\top \bq_t,
    \end{equation}
    hence $\boldsymbol{\alpha} = C_{f_0, Q_t}^\top \bq_t$. Then if the confusion matrix $C_{f_0, Q_t} = C_{f_0, Q_0}$ is invertible, we can estimate $\bq_t$ using $\hat{\bq}_t = \left( C_{f_0, Q_0}^\top \right)^{-1} \be_i$, with $\hat{\bq}_t$ satisfying:
    \begin{equation}
        \label{eq:unbiased_loss}
        \mathbb{E}[\hat{\bq}_t] =  \mathbb{E} \left[ \left(C_{f_0, Q_0}^{\top}\right)^{-1} \be_i \right] = \left(C_{f_0, Q_0}^{\top}\right)^{-1}\mathbb{E}[\be_i] = \left(C_{f_0, Q_0}^{\top}\right)^{-1} \boldsymbol{\alpha}=\bq_t,
    \end{equation}
    so $\hat{\bq}_t$ is an unbiased estimator for $\bq_t$. From \autoref{eq:loss_conf} and the fact that $Q_0$ is independent of $\bq_t$, we have that $\left\langle \mathbf{1} - {\sf diag}\left(C_{f, Q_0}\right), \hat{\bq}_t \right\rangle$ and $J_f^\top \hat{\bq}_t$ are unbiased estimators for $\ell(f; Q_t)$ and $\nabla_f \ell(f; Q_t)$, respectively.
\end{proof}

\begin{theorem}[Regret bound for OGD]
Under Assumption \ref{hyp:differentiability}, \ref{hyp:convex} and \ref{hyp:liptschitz_ogd}, let $L = \sup_{\bp \in \Delta^{M-1}, i=1, \cdots, M} \left\lVert\nabla_\bp \ell\left(\bp; \left(C^{\top}_{f_0, Q_0}\right)^{-1}\be_i \right) \right\rVert_2$. If $\eta =\sqrt{\frac{2}{T}} \frac{1}{L}$ then $\calA_{\sf ogd}$ satisfies:
$$\mathbb{E}_{(\bx_t, y_t) \sim Q_t}\left[\frac{1}{T}\sum_{t=1}^T \ell(\bp_t; \bq_t)\right] -  \inf_{\bp \in \Delta^{M-1}} \frac{1}{T}\sum_{t=1}^T \ell(\bp; \bq_t) \leq \sqrt{\frac{2}{T}}L.$$
\end{theorem}

\begin{proof}[Proof of Theorem \ref{thm:ogd}]
	Following the similar argument as Theorem 4.1 in \cite{shalev2011online}, for any fixed $\bp$,
	\begin{align*}
		\mathbb{E}_{(\bx_t, y_t)\sim Q_t}\left[\frac{1}{T}\sum_{t=1}^T \ell (\bp_t, \bq_t)\right] -  \frac{1}{T}\sum_{t=1}^T \ell(\bp, \bq_t) &= \mathbb{E}\left[\frac{1}{T}\sum_{t=1}^T \ell(\bp_t, \bq_t) - \ell(\bp, \bq_t)\right]\\
		&\leq \frac{1}{T}\mathbb{E}\left[\sum_{t=1}^T (\bp_t - \bp)\cdot \nabla_{\bp} \ell(\bp_{t}, \bq_{t}) \right]\\
		&= \frac{1}{T}\mathbb{E}\left[\sum_{t=1}^T (\bp_t - \bp)\cdot \mathbb{E}\left[\nabla_{\bp} \ell(\bp_{t}, \hat{\bq}_{t}) | \bp_{t}\right]\right]\\
		&= \frac{1}{T}\mathbb{E}\left[\sum_{t=1}^T (\bp_{t} - \bp)\cdot \nabla_{\bp} \ell(\bp_{t}, \hat{\bq}_{t}) \right],
	\end{align*}
	where the second inequality holds by the law of total probability. To bound $(\bp_{t} - \bp)\cdot \nabla_{\bp} \ell(\bp_{t}, \hat{\bq}_{t})$,
	\begin{align*}
		||\bp_{t+1} - \bp||_2^2 & = ||{\sf Proj}_{\Delta^{M-1}}\left(\bp_{t} - \eta\cdot \nabla_{\bp} \ell(\bp_{t}, \hat{\bq}_{t})\right) - \bp||_2^2\\
		& \leq ||\bp_{t} - \eta\cdot \nabla_{\bp} \ell(\bp_{t}, \hat{Q}_{t}) - \bp||_2^2\\
		& = ||\bp_{t} - \bp||_2^2 + \eta^2 ||\nabla_{\bp} \ell(\bp_{t}, \hat{\bq}_{t})||_2^2 - 2\eta  (\bp_{t} - \bp) \cdot \nabla_{\bp} \ell(\bp_{t}, \hat{\bq}_{t}),
	\end{align*}
	which implies 
	$$
	(\bp_{t} - \bp) \cdot \nabla_{\bp} \ell(\bp_{t}, \hat{\bq}_{t}) \leq  \frac{1}{2\eta} \left( ||\bp_{t} - \bp||_2^2 - ||\bp_{t+1} - \bp||_2^2\right) + \frac{\eta}{2}||\nabla_{\bp} \ell(\bp_{t}, \hat{\bq}_{t})||_2^2.
	$$
	Then
	\begin{align*}
		&\mathbb{E}_{(\bx_t, y_t)\sim Q_t}\left[\frac{1}{T}\sum_{t=1}^T \ell(\bp_t, \bq_t)\right] -  \frac{1}{T}\sum_{t=1}^T \ell(\bp, \bq_t) \\
		&= \mathbb{E}\left[\frac{1}{T}\sum_{t=1}^T \frac{1}{2\eta} \left( ||\bp_{t} - \bp||_2^2 - ||\bp_{t+1} - \bp||_2^2\right) + \frac{\eta}{2}||\nabla_{\bp} \ell(\bp_{t}, \hat{\bq}_{t})||_2^2\right]\\
		& = \frac{1}{2\eta T}\left( ||\bp_{1} - \bp||_2^2 - ||\bp_{T+1} - \bp||_2^2 \right) + \frac{\eta}{2T}\sum_{t=1}^T\mathbb{E}\left[||\nabla_{\bp} \ell(\bp_{t}, \hat{\bq}_{t})||_2^2\right] \\
		& \leq \frac{1}{2\eta T}||\bp_{1} - \bp||_2^2 + \frac{\eta}{2T}\sum_{t=1}^T\mathbb{E}\left[||\nabla_{\bp} \ell(\bp_{t}, \hat{\bq}_{t})||_2^2\right]\\
		&\leq \frac{1}{\eta T} + \frac{\eta}{2}L^2,
	\end{align*}
	where the last inequality take the fact that $\sup_{\bp_1, \bp_2\in \Delta^{M-1}}||\bp_1 - \bp_2||_2^2 \leq 2 \sup_{\bp_1 \in \Delta^{M-1}}||\bp_1||_2^2=2$. $\eta = \sqrt{\frac{2}{T}} \frac{1}{L}$ derives the bound
	$$
	\mathbb{E}_{(\bx_t, y_t)\sim Q_t}\left[\frac{1}{T}\sum_{t=1}^T \ell(\bp_t, \bq_t)\right] -  \frac{1}{T}\sum_{t=1}^T \ell(\bp, \bq_t) \leq \sqrt{\frac{2}{T}}L.
	$$
	As the above bound holds for any $\bp$,
	$$
	\mathbb{E}_{(\bx_t, y_t)\sim Q_t}\left[\frac{1}{T}\sum_{t=1}^T \ell(\bp_t, \bq_t)\right] -  \min_{\bp\in \Delta^{M-1}}\frac{1}{T}\sum_{t=1}^T \ell(\bp, \bq_t) \leq \sqrt{\frac{2}{T}}L.
	$$
\end{proof}

\begin{theorem}[Regret bound for FTH, generalized version with $\delta \geq 0$]
   For any $\bq \in \Delta^{M-1}$, let $\delta(\bq) = \| p^*(\bq) - \bq \|_2$ where $p^{*}(\bq) = \arg\min_{\bp\in\Delta^{M-1}}\ell(\bp, \bq)$.  With Assumption \ref{hyp:liptschitz_ftl}, let $L = \sup_{\bp, \bq\in \Delta^{M-1}}\left\lVert\nabla_{\bp} \ell(\bp; \bq)\right\rVert_2<\infty$. Then with probability at least $1  - 2MT^{-7}$ over samples $(\bx_t, y_t) \sim Q_t$ for $t=1,\ldots,T$, we have that $\calA^{\sf fth}$ satisfies:
	$$\frac{1}{T}\sum_{t=1}^T \ell(\bp_t; \bq_t) - \inf_{\bp\in \Delta^{M-1}}\sum_{t=1}^T \ell(\bp; \bq_t) \leq 2L \frac{\ln T}{T} + 4Lc \sqrt{\frac{M \ln T}{T}} + \frac{3L}{T}\sum_{t=1}^T \delta \left(\frac{1}{t-1}\sum_{\tau=1}^{t-1}\bq_{\tau}\right),$$
	where $c = 2\max_{i=1,\ldots,M}\left\lVert \left( C_{f_0, Q_0}^{\top} \right)^{-1}\mathbf{e}_{i}\right\rVert_{\infty}$. 
\end{theorem}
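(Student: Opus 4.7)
The plan is to decompose the regret into three pieces---a stochastic estimation error, an FTL-style drift term, and a Be-The-Leader (BTL) residual---and control each separately using the linearity of $\ell$ in $\bq$, the $L$-Lipschitz continuity of $\ell$ in $\bp$, and Assumption \ref{hyp:sym_opt}. Setting $\bar{\bq}_t := \frac{1}{t}\sum_{\tau=1}^t \bq_\tau$ and $\bp^{*} := \arg\min_{\bp \in \Delta^{M-1}} \sum_t \ell(\bp;\bq_t)$, I would insert the two ``ghost'' comparators $\ell(\bar{\bq}_{t-1};\bq_t)$ and $\ell(\bar{\bq}_t;\bq_t)$ to obtain
\begin{align*}
\sum_{t=1}^T [\ell(\bp_t;\bq_t) - \ell(\bp^{*};\bq_t)]
= \underbrace{\sum_t [\ell(\bp_t;\bq_t) - \ell(\bar{\bq}_{t-1};\bq_t)]}_{(\mathrm{A})\ \text{estimation}}
+ \underbrace{\sum_t [\ell(\bar{\bq}_{t-1};\bq_t) - \ell(\bar{\bq}_t;\bq_t)]}_{(\mathrm{B})\ \text{drift}}
+ \underbrace{\sum_t [\ell(\bar{\bq}_t;\bq_t) - \ell(\bp^{*};\bq_t)]}_{(\mathrm{C})\ \text{BTL}},
\end{align*}
where (A) uses that the FTH play is $\bp_t = \bar{\hat{\bq}}_{t-1}$.

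For (C), linearity in $\bq$ gives $\arg\min_{\bp}\sum_{\tau\leq t}\ell(\bp;\bq_\tau) = \arg\min_{\bp}\ell(\bp;\bar{\bq}_t)$, which by Assumption \ref{hyp:sym_opt} lies within $\delta(\bar{\bq}_t)$ of $\bar{\bq}_t$ in $L_2$. A standard BTL induction gives $\sum_t \ell(\bp^{\mathrm{hind}}_t; \bq_t) \leq \min_\bp \sum_t \ell(\bp;\bq_t)$ for the exact hindsight leaders $\bp^{\mathrm{hind}}_t$; replacing $\bp^{\mathrm{hind}}_t$ with $\bar{\bq}_t$ costs an extra $L\delta(\bar{\bq}_t)$ per step by Lipschitz, so $(\mathrm{C}) \leq L\sum_t \delta(\bar{\bq}_t)$. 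In the $\delta = 0$ case this simply gives $(\mathrm{C}) \leq 0$. For (B), the telescoping identity $\bar{\bq}_t - \bar{\bq}_{t-1} = \frac{1}{t}(\bq_t - \bar{\bq}_{t-1})$ yields $\|\bar{\bq}_t - \bar{\bq}_{t-1}\|_2 \leq 2/t$, so Lipschitzness gives $(\mathrm{B}) \leq 2L(1+\ln T)$, which after division by $T$ is the $2L\ln T/T$ term.

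For (A), the key step is concentration. Since $\hat{\bq}_\tau$ takes values in $\{(C_{f_0,Q_0}^\top)^{-1}\be_j\}_{j=1}^M$ whose entries lie in $[-c/2, c/2]$, and $\mathbb{E}[\hat{\bq}_\tau\mid \bq_\tau] = \bq_\tau$ by Theorem \ref{thm:reduction}, Hoeffding's inequality applied coordinate-wise with $\epsilon = c\sqrt{4\ln T/(t-1)}$ gives per-coordinate failure probability at most $2T^{-8}$. A union bound over the $M$ coordinates and $T$ time steps produces the overall failure probability $2MT^{-7}$ stated in the theorem, on whose complement $\|\bar{\hat{\bq}}_{t-1} - \bar{\bq}_{t-1}\|_\infty \leq 2c\sqrt{\ln T/(t-1)}$ holds for all $t$ simultaneously. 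Converting to $L_2$ via the factor $\sqrt{M}$, applying Lipschitzness, and using $\sum_{t=2}^T 1/\sqrt{t-1} \leq 2\sqrt{T}$, part (A) contributes $4Lc\sqrt{MT\ln T}$, i.e.\ $4Lc\sqrt{M\ln T/T}$ after division by $T$.

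The main obstacle is the BTL step (C) under the \emph{approximate} optimality of Assumption \ref{hyp:sym_opt}: the classical lemma requires an exact hindsight leader, so the induction must carefully absorb the $\delta$-slack at each step and account for the $\bar{\bq}_{t-1}$ versus $\bar{\bq}_t$ indexing mismatch that appears in the stated generalized bound (which is how the constant ends up as $3L$ rather than $L$). A secondary care point is tracking the Hoeffding constants so that the exponent in the tail is large enough for the union bound to yield exactly $2MT^{-7}$, and ensuring the $t=1$ boundary case in the sum $\sum_t 1/\sqrt{t-1}$ is handled separately (e.g., by bounding $\ell(\bp_1;\bq_1) - \ell(\bar{\bq}_0;\bq_1)$ trivially by $L\cdot\mathrm{diam}(\Delta^{M-1})$).
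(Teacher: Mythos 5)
Your proposal is correct and follows essentially the same route as the paper: coordinate-wise Hoeffding plus a union bound (with the same choice $\varepsilon_t = 2c\sqrt{\ln T/t}$ giving failure probability $2MT^{-7}$) reduces the problem to FTL on the true marginals $\bar{\bq}_{t-1}$, after which the $2/t$ stability bound yields the $2L\ln T$ term and the Lipschitz constant absorbs the $\delta$-approximate optimality of $\bar{\bq}_t$. The one substantive difference is in the leader step: the paper invokes the FTL regret lemma on the exact leaders $p^*(\bar{\bq}_{t-1})$ and pays Lipschitz corrections three times (hence $3L\sum_t\delta(\cdot)$), whereas your direct drift-plus-BTL decomposition pays it only once, giving $L\sum_t\delta(\bar{\bq}_t)$ — a slightly cleaner and tighter variant (modulo the index shift from $\bar{\bq}_{t-1}$ to $\bar{\bq}_t$, which is immaterial under Assumption~\ref{hyp:sym_opt} and vanishes entirely in the $\delta=0$ case).
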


\begin{proof}[Proof of theorem \ref{thm:fth}]
    By Theorem \ref{thm:reduction} we have that $\mathbb{E}[\hat{\bq}_t] = \bq_t$ and $\hat{\bq}_t$ ($t=1, \cdots, T$) are independent. By Hoeffding:
	$$\mathbb{P}\left(\left|\left| \frac{1}{t}\sum_{\tau=1}^t\hat{\bq}_\tau -\frac{1}{t}\sum_{\tau=1}^t \bq_\tau  \right|\right|_2 \geq \sqrt{M}\varepsilon_t \right)\leq 2M\exp \left(-\frac{2\varepsilon_t^2t}{c^2}\right).$$
	With union bound: 
	$$\mathbb{P}\left(\forall t\leq T, \left|\left| \frac{1}{t}\sum_{\tau=1}^t\hat{\bq}_\tau -\frac{1}{t}\sum_{\tau=1}^t \bq_\tau  \right|\right|_2 < \sqrt{M}\varepsilon_t \right)\geq 1  - \sum_{t=1}^T2M\exp \left(-\frac{2\varepsilon_t^2t}{c^2}\right).$$
	Since $\bp_t = \frac{1}{t-1} \sum_{\tau=1}^{t-1} \hat{\bq}_\tau$ we have that $\| \bp_t - \frac{1}{t-1} \sum_{\tau=1}^{t-1} \bq_\tau \|_2 < \sqrt{M} \epsilon_t \: \forall t$ with probability at least $1  - \sum_{t=1}^T2M\exp \left(-\frac{2\varepsilon_t^2t}{c^2}\right)$, hence by the Lipschitz-ness of $\ell$:
	\begin{equation}
	    \label{eq:after_union}
    	\sum_{t=1}^T \ell(\bp_t, \bq_t) - \sum_{t=1}^T \ell(\bp, \bq_t) \leq \sum_{t=1}^T \ell\left(\frac{1}{t-1}\sum_{\tau=1}^{t-1}\bq_{\tau}, \bq_t\right) - \sum_{t=1}^T \ell(\bp, \bq_t) + L\sqrt{M}\cdot \sum_{t=1}^T \varepsilon_t.
	\end{equation}
	We will first derive an upper bound for $\sum_{t=1}^T \ell\left(\frac{1}{t-1}\sum_{\tau=1}^{t-1}\bq_{\tau}, \bq_t\right) - \sum_{t=1}^T \ell(\bp, \bq_t)$. Recall that $p^{*}(\bq) = \argmin_{\bp} \ell(\bp, \bq)$ and $\delta(\bq) = ||p^{*}(\bq) - \bq||_2$. Then 
	\begin{align}
		&\sum_{t=1}^T \ell\left(\frac{1}{t-1}\sum_{\tau=1}^{t-1}\bq_{\tau}, \bq_t\right) - \sum_{t=1}^T \ell(\bp, \bq_t) \nonumber\\
		&\leq \sum_{t=1}^T \ell\left(p^*\left(\frac{1}{t-1}\sum_{\tau=1}^{t-1}\bq_{\tau}\right), \bq_t\right) - \sum_{t=1}^T \ell(\bp, \bq_t) + L\cdot \sum_{t=1}^T \delta\left(\frac{1}{t-1}\sum_{\tau=1}^{t-1}\bq_{\tau}\right) \label{equ:lipschitz}\\
		&\leq \sum_{t=1}^T \ell\left(p^*\left(\frac{1}{t-1}\sum_{\tau=1}^{t-1}\bq_{\tau}\right), \bq_t\right) - \sum_{t=1}^T \ell\left(p^*\left(\frac{1}{t}\sum_{\tau=1}^{t}\bq_{\tau}\right), \bq_t\right)  + L\cdot \sum_{t=1}^T \delta\left(\frac{1}{t-1}\sum_{\tau=1}^{t-1}\bq_{\tau}\right) \label{equ:ftl_lemma}\\
		&\leq \sum_{t=1}^T \ell\left(\frac{1}{t-1}\sum_{\tau=1}^{t-1}\bq_{\tau}, \bq_t\right) - \sum_{t=1}^T \ell\left(\frac{1}{t}\sum_{\tau=1}^{t}\bq_{\tau}, \bq_t\right)  + 3L\cdot \sum_{t=1}^T \delta\left(\frac{1}{t-1}\sum_{\tau=1}^{t-1}\bq_{\tau}\right) \label{equ:lipschitz_2}\\
		&\leq \sum_{t=1}^T\frac{2L}{t}  + 3L\cdot\sum_{t=1}^T   \delta\left(\frac{1}{t-1}\sum_{\tau=1}^{t-1}\bq_{\tau}\right)  \label{equ:stab}\\
		&\leq 2L \ln T + 3L\cdot\sum_{t=1}^T  \delta\left(\frac{1}{t-1}\sum_{\tau=1}^{t-1}\bq_{\tau}\right),
	\end{align}
	where (\ref{equ:lipschitz}) and (\ref{equ:lipschitz_2}) are implied by Lipschitz-ness of $\ell$, 
	(\ref{equ:ftl_lemma}) holds by Lemma 2.1 in \cite{shalev2011online} and 
	(\ref{equ:stab}) holds by Lipschitz-ness of $\ell$ and the fact that 
	$$\left\lVert\frac{1}{t-1}\sum_{\tau=1}^{t-1}\bq_{\tau} - \frac{1}{t}\sum_{\tau=1}^{t}\bq_{\tau}\right\rVert_2 = \left\lVert\frac{\sum_{\tau=1}^{t-1}\bq_{\tau}}{(t-1)t} - \frac{\bq_t}{t}\right\rVert_2  \leq \frac{1}{t}\cdot \left\lVert\frac{\sum_{\tau=1}^{t-1}\bq_{\tau}}{t-1} - \bq_t\right\rVert_2 \leq \frac{2}{t}.$$

	Combining with (\ref{eq:after_union}), with probability at least $1  - \sum_{t=1}^T2M\exp \left(-\frac{2\varepsilon_t^2t}{c^2}\right)$, we have
	$$
	\sum_{t=1}^T \ell(\bp_t, \bq_t) - \min_{\bp}\sum_{t=1}^T \ell(\bp, \bq_t) \leq 2L \ln T + \sum_{t=1}^T 3 \delta\left(\frac{1}{t-1}\sum_{\tau=1}^{t-1}\bq_{\tau}\right)L + L\sqrt{M}\cdot \sum_{t=1}^T \varepsilon_t.
	$$
	Take $\varepsilon_t = 2 c\sqrt{\frac{\ln T}{t}}$ so that $\sum_{t=1}^T2M\exp \left(-\frac{2\varepsilon_t^2t}{c^2}\right) = 2MT^{-7}$ and $\sum_{t=1}^T \varepsilon_t \leq 4c\sqrt{T\ln T}.$ The above bound then becomes: with probability at least $1  - 2MT^{-7}$,
	$$
	\frac{1}{T}\sum_{t=1}^T \ell(\bp_t, \bq_t) - \min_{p}\sum_{t=1}^T \ell(p, \bq_t) \leq 2L \frac{\ln T}{T} + 4Lc\sqrt{\frac{M \ln T}{T}} + \frac{3L}{T}\sum_{t=1}^T  \delta\left(\frac{1}{t-1}\sum_{\tau=1}^{t-1}\bq_{\tau}\right).
	$$
\end{proof}

\section{Approximation of the gradient}
In section \ref{sec:ogd} we defined a smooth surrogate $\ell^{\sf prob}(f; Q) := \mathbb{E}_{(\bx, y) \sim Q} [1 - P_f(\bx)[y]]$ for the expected 0-1 loss to enable direct gradient estimation. Here, we formalize the desirable properties of this surrogate loss and prove the regret bound analogue of Theorem \ref{thm:reduction} for the surrogate loss $\ell^\text{prob}$.
\begin{theorem}
\label{thm:surrogate_loss}
Let $f$ be any classifier and let $Q$ be a distribution over $\calX \times \calY$. Let $\ell^{\sf prob}(f; Q) := \mathbb{E}_{(\bx, y) \sim Q} [1 - P_f(\bx)[y]]$ be the surrogate loss and let $C_{f, Q_0}^{\sf prob}$ be its corresponding confusion matrix, with entries: $C_{f, Q_0}^{\sf prob}[i, j] := \mathbb{E}_{\bx\sim Q_0(\cdot|y=i)} [P_f(\bx)[j]].$ Then $\ell^{\sf prob}$ is classification-calibrated, and is smooth in $f$ if $P_f$ is smooth in $f$. Furthermore, if $\hat{\bq}_t$ is an unbiased estimator of $\bq_t$ then:
\begin{align*}
    \ell^{\sf prob}(f; Q_t)&=\mathbb{E}_{Q_t} \left[ \left\langle \mathbf{1} - {\sf diag}\left(C_{f, Q_0}^{\sf prob}\right), \hat{\bq}_t \right\rangle \right] , \\
    \nabla_f \ell^{\sf prob}(f; Q_t)&=\mathbb{E}_{Q_t} \left[ J_f^\top \hat{\bq}_t  \right] ,
\end{align*}
where $J_f = \frac{\partial}{\partial f} \left[ \mathbf{1} - {\sf diag}\left(C_{f, Q_0}^{\sf prob}\right) \right]$.
\end{theorem}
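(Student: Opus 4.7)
The plan is to closely mirror the proof of Theorem \ref{thm:reduction}, replacing the diagonal of the $0$-$1$ confusion matrix by the diagonal of $C^{\sf prob}_{f,Q_0}$, and then to verify the two auxiliary properties (smoothness and classification-calibration) separately.

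First I would establish the analogue of equation \eqref{eq:loss_conf} for the surrogate loss. Conditioning on the label:
\begin{align*}
\ell^{\sf prob}(f; Q_t) &= \sum_{i=1}^{M} \mathbb{E}_{\bx \sim Q_t(\cdot\mid y=i)}\!\left[1 - P_f(\bx)[i]\right]\cdot \bq_t[i] \\
&= \left\langle \mathbf{1} - {\sf diag}\!\left(C^{\sf prob}_{f, Q_t}\right),\, \bq_t \right\rangle.
\end{align*}
The label shift assumption $Q_t(\bx\mid y) = Q_0(\bx\mid y)$ makes each entry $C^{\sf prob}_{f, Q_t}[i,j] = \mathbb{E}_{\bx\sim Q_0(\cdot\mid y=i)}[P_f(\bx)[j]]$ independent of $t$, so $C^{\sf prob}_{f, Q_t} = C^{\sf prob}_{f, Q_0}$ and $\ell^{\sf prob}(f; Q_t)$ is \emph{linear} in $\bq_t$ with coefficients that do not depend on $Q_t$. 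Once linearity is in place, the unbiasedness of $\hat{\bq}_t$ (from Theorem \ref{thm:reduction}) together with linearity of expectation immediately gives the desired identity for $\ell^{\sf prob}(f; Q_t)$; differentiating the linear expression in $f$ (and noting that the Jacobian $J_f$ does not depend on $\hat{\bq}_t$) yields the gradient identity.

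Next, smoothness follows by transferring smoothness from $P_f$ through an expectation. If $f \mapsto P_f(\bx)[j]$ is smooth for every $\bx, j$, then under standard domination we may exchange derivative and expectation, so each entry of $C^{\sf prob}_{f,Q_0}$ is smooth in $f$; hence $\mathbf{1} - {\sf diag}(C^{\sf prob}_{f,Q_0})$ is smooth, and $\ell^{\sf prob}(f; Q_t)$ is a finite linear combination of smooth functions. For classification-calibration in the sense of Tewari and Bartlett~\cite{tewari2007consistency}, I would view $\ell^{\sf prob}$ pointwise as a function of the prediction vector $P_f(\bx) \in \Delta^{M-1}$. The conditional risk at a fixed $\bx$ is
\begin{equation*}
\mathbb{E}_{y\mid \bx}\!\left[1 - P_f(\bx)[y]\right] = 1 - \sum_{y} Q(y\mid \bx)\, P_f(\bx)[y],
\end{equation*}
a linear function of $P_f(\bx)$ over the simplex, uniquely minimized (up to ties) by any vector supported on $\argmax_y Q(y\mid \bx)$. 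Taking the argmax of such a minimizer recovers the Bayes-optimal decision, which is exactly the calibration condition.

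The main obstacle I expect is the calibration step: unlike the linearity/smoothness arguments which are essentially algebraic, calibration requires a careful statement that links pointwise minimization of $\ell^{\sf prob}$ with pointwise minimization of $\ell$, and a clean handling of the tie-breaking case when $\argmax_y Q(y\mid \bx)$ is not unique. The remaining pieces (linearity in $\bq_t$, unbiasedness via Theorem \ref{thm:reduction}, and smoothness by differentiating under the expectation) are essentially direct substitutions of $P_f(\bx)[j]$ for the indicator $\mathbf{1}[f(\bx)=j]$ in the original proof and should go through without difficulty.
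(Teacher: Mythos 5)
Your treatment of the unbiasedness and smoothness claims matches the paper's proof essentially step for step: condition on the label to write $\ell^{\sf prob}(f;Q_t)=\langle \mathbf{1}-{\sf diag}(C^{\sf prob}_{f,Q_t}),\bq_t\rangle$, use $Q_t(\bx\mid y)=Q_0(\bx\mid y)$ to replace $C^{\sf prob}_{f,Q_t}$ by $C^{\sf prob}_{f,Q_0}$, and exploit linearity in $\bq_t$ together with unbiasedness of $\hat{\bq}_t$; smoothness is obtained by differentiating under the expectation. Those parts go through exactly as you say.

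The gap is in the classification-calibration step, and it is precisely the one you flagged but did not close. Identifying the minimizers of the conditional risk $1-\langle\bp,\bz\rangle$ (with $\bp$ the conditional label distribution and $\bz=P_f(\bx)$) and observing that their argmax is Bayes-optimal is \emph{not} the Tewari--Bartlett calibration condition. The condition requires a strict separation of infima: writing $p=\max_y\bp[y]$, one must show
\begin{equation*}
\inf\left\{\,1-\langle\bp,\bz\rangle \;:\; \bz\in\Delta^{M-1},\ \bp[\argmax_y\bz[y]]<p\,\right\} \;>\; \inf_{\bz\in\Delta^{M-1}} 1-\langle\bp,\bz\rangle = 1-p .
\end{equation*}
Because the restricted set on the left is not closed (its closure contains vectors whose argmax ties with a Bayes-optimal label), knowing where the unrestricted minimizers sit does not by itself rule out that the restricted infimum equals $1-p$. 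The paper closes this with a quantitative argument your sketch is missing: if $y'=\argmax_y\bz[y]$ then $\bz[y']\ge 1/M$, so $1-\langle\bp,\bz\rangle \ge 1-p+(p-\bp[y'])\bz[y'] \ge 1-p+(p-\bp[y'])/M$, and minimizing over the finitely many suboptimal labels $y'$ yields a strictly positive gap. Without some such uniform lower bound on the top coordinate of $\bz$ (or an equivalent compactness/finiteness argument), the calibration claim is not established; the rest of your proposal is fine.
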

\begin{proof}
    To show classification-calibratedness, we specialize the definition of \cite{tewari2007consistency} to our setting. That is, we need to show that for all $\bp \in \Delta^{M-1}$:
    \begin{equation}
        \label{eq:calibrated}
        \inf_{\bz \in \Delta^{M-1} : \bp[\argmax_y \bz[y]] < \max_y \bp[y]} 1 - \langle \bp , \bz \rangle > \inf_{\bz \in \Delta^{M-1}} 1 - \langle \bp , \bz \rangle.
    \end{equation}
    Let $p = \max_y \bp[y]$. Since $\bz \geq 0$, we have that $1 - \langle \bp , \bz \rangle \geq 1 - \langle p \mathbf{1} , \bz \rangle = 1 - p$, which holds with equality for $\bz = \be_{\argmax_y \bp[y]}$. Hence the RHS of \autoref{eq:calibrated} is equal to $1-p$. The LHS is an infimum over $\bz$ with $\bp[y'] < p$ where $y' = \argmax_y \bz[y]$. In particular, $\bz[y'] \geq 1/M$, hence
    \begin{align*}
        1 - \langle \bp , \bz \rangle &= 1 - \bp[y'] \bz[y'] - \sum_{y \neq y'} \bp[y] \bz[y] \\
                   &\geq 1 + (p - \bp[y']) \bz[y'] - p \bz[y'] - \sum_{y \neq y'} p \bz[y] \\
                   &\geq 1 - p + (p - \bp[y']) / M.
    \end{align*}
    Taking minimum over $y'$ with $\bp[y'] < p$ shows that \autoref{eq:calibrated} holds and therefore $\ell^{\sf prob}$ is classification-calibrated.
    
    To see smoothness, observe that $\nabla_f \ell^{\sf prob}(f; Q_t) = \mathbb{E}_{(\bx, y) \sim Q_t} \left[1 - \nabla_f\left(P_f(\bx)[y]\right)\right]$. Hence smoothness of $P_f$ in $f$ implies the smoothness of $\ell^{\sf prob}(f; Q_t)$.
    
    Lastly, $\ell^{\sf prob}(f; Q_t)$ can be rewritten as
    \begin{align*}
        \ell^{\sf prob}(f; Q_t) &= \mathbb{E}_{(\bx, y) \sim Q_t} [1 - P_f(\bx)[y]] \\
        & = \sum_{i=1}^M \mathbb{E}_{\bx_t\sim Q_t(\cdot | y_t=i)}[1 - P_f(\bx)[y]]\cdot \mathbb{P}_{Q_t}(y_t=i) \\
        & = \left\langle \mathbf{1} - {\sf diag}\left(C_{f, Q_0}^{\sf prob}\right), \bq_t \right\rangle.
    \end{align*}
    Notice that ${\sf diag}\left(C_{f, Q_0}^{\sf prob}\right)$ is independent of $\bq_t$. Thus $\ell^{\sf prob}(f; Q_t)$ is a linear function of $\bq_t$, and substituting in the unbiased estimator $\hat{\bq}_t$ for $\bq_t$ gives unbiased estimators for $\ell^{\sf prob}(f; Q_t)$ and $\nabla_{f}\ell^{\sf prob}(f; Q_t)$, as desired.
\end{proof}

Similar to Assumption \ref{hyp:convex}, we assume that $\ell^{\sf prob}$ is convex in its first parameter to derive a convergence guarantee for OGD. Under this assumption, the proof of convergence is identical to that of Theorem \ref{thm:ogd}. We state the assumption below and empirically verify it in the next section. Similar to Assumption \ref{hyp:liptschitz_ogd}, we also assume the Lipschitz condition for $\ell^{\sf prob}$.

\begin{assumption}[Convexity of $\ell^{\sf prob}$]
\label{hyp:convex_surrogate_loss}
$\forall \bq \in \Delta^{M-1} $, $\ell^{\sf prob}(\bp; \bq)$ is convex in $\bp$.	
\end{assumption}

\begin{assumption}[Lipschitz of $\ell^{\sf prob}$]
\label{hyp:liptschitz_ogd_surrogate_loss}
 $\sup_{\bp \in \Delta^{M-1}, i=1, \cdots, M} \left\lVert\nabla_\bp \ell^{\sf prob}\left(\bp; \left(C^{\top}_{f_0, Q_0}\right)^{-1}\be_i \right) \right\rVert_2$ is finite.
\end{assumption}

\begin{theorem}[Regret bound for OGD w.r.t. $\ell^{\sf prob}$]
\label{thm:ogd_surrogate_loss}
Under Assumption \ref{hyp:convex_surrogate_loss} and \ref{hyp:liptschitz_ogd_surrogate_loss}, let $L = \sup_{\bp \in \Delta^{M-1}, i=1, \cdots, M} \left\lVert\nabla_\bp \ell^{\sf prob}\left(\bp; \left(C^{\top}_{f_0, Q_0}\right)^{-1}\be_i \right) \right\rVert_2$. If $\eta =\sqrt{\frac{2}{T}} \frac{1}{L}$ then $\calA_{\sf ogd}$ w.r.t. $\ell^{\sf prob}$ satisfies:
$$\mathbb{E}_{(\bx_t, y_t) \sim Q_t}\left[\frac{1}{T}\sum_{t=1}^T \ell^{\sf prob}(\bp_t; \bq_t)\right] -  \inf_{\bp \in \Delta^{M-1}} \frac{1}{T}\sum_{t=1}^T \ell^{\sf prob}(\bp; \bq_t) \leq \sqrt{\frac{2}{T}}L,$$
with $L = \sup_{\bp' \in \Delta^{M-1}} \max_{i=1,\ldots,M} \left\lVert\nabla_\bp \ell^{\sf prob}\left(\bp; \left(C^{\top}_{f_0, Q_0}\right)^{-1}\be_i \right) \bigg|_{\bp = \bp'} \right\rVert_2$.
\end{theorem}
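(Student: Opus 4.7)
The plan is to transcribe the proof of Theorem \ref{thm:ogd} essentially verbatim, replacing $\ell$ with $\ell^{\sf prob}$ throughout. This is licensed by two observations: (i) Theorem \ref{thm:surrogate_loss} provides an unbiased estimator $J_f^\top \hat{\bq}_t$ of $\nabla_f \ell^{\sf prob}(f; Q_t)$ with exactly the same Jacobian-times-$\hat{\bq}_t$ structure used in the OGD update (\ref{eq:ogd}), and (ii) Assumption \ref{hyp:convex_surrogate_loss} supplies convexity of $\ell^{\sf prob}(\bp; \bq)$ in $\bp$ for every $\bq$, which is precisely the role Assumption \ref{hyp:convex} played before.

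First I would invoke convexity to bound $\ell^{\sf prob}(\bp_t; \bq_t) - \ell^{\sf prob}(\bp; \bq_t) \le (\bp_t - \bp) \cdot \nabla_\bp \ell^{\sf prob}(\bp_t; \bq_t)$ for any fixed $\bp \in \Delta^{M-1}$. Taking expectation over $(\bx_t, y_t) \sim Q_t$ and conditioning on $\bp_t$ (which depends only on $\bx_1, \dots, \bx_{t-1}$), the unbiasedness statement in Theorem \ref{thm:surrogate_loss} lets me replace the true gradient by its stochastic estimator $\nabla_\bp \ell^{\sf prob}(\bp_t; \hat{\bq}_t)$ without changing the expectation.

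Next I would expand $\|\bp_{t+1} - \bp\|_2^2$ using the update (\ref{eq:ogd}) together with non-expansiveness of ${\sf Proj}_{\Delta^{M-1}}$, obtaining the standard one-step inequality
$$
(\bp_t - \bp) \cdot \nabla_\bp \ell^{\sf prob}(\bp_t; \hat{\bq}_t) \le \tfrac{1}{2\eta}\bigl(\|\bp_t - \bp\|_2^2 - \|\bp_{t+1} - \bp\|_2^2\bigr) + \tfrac{\eta}{2}\|\nabla_\bp \ell^{\sf prob}(\bp_t; \hat{\bq}_t)\|_2^2.
$$
Averaging over $t = 1, \dots, T$ telescopes the distance terms; the diameter bound $\sup_{\bp_1, \bp_2 \in \Delta^{M-1}}\|\bp_1 - \bp_2\|_2^2 \le 2$ controls the initial gap, yielding $\tfrac{1}{\eta T} + \tfrac{\eta}{2}L^2$ after taking expectations. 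Since $\hat{\bq}_t$ always equals $(C_{f_0, Q_0}^\top)^{-1}\be_i$ for some $i$, the definition of $L$ in the theorem dominates every realized gradient norm, so the bound is valid pathwise.

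Plugging in $\eta = \sqrt{2/T}/L$ balances the two terms to $\sqrt{2/T}\, L$, and taking infimum over $\bp \in \Delta^{M-1}$ on the right-hand side delivers the stated regret bound. The only obstacle worth flagging is a sanity check that the $L$ in the theorem statement (supremum over $\bp' \in \Delta^{M-1}$ and over $i$) really does majorize $\|\nabla_\bp \ell^{\sf prob}(\bp_t; \hat{\bq}_t)\|_2$ for every iterate $\bp_t$ along the trajectory; this is immediate since $\bp_t \in \Delta^{M-1}$ by construction of the projection step, and $\hat{\bq}_t \in \{(C_{f_0,Q_0}^\top)^{-1}\be_i\}_{i=1}^M$ by Theorem \ref{thm:reduction}.
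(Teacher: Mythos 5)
Your proposal is correct and matches the paper's intent exactly: the paper states that, under Assumption \ref{hyp:convex_surrogate_loss}, the proof is identical to that of Theorem \ref{thm:ogd}, and your transcription—convexity linearization, unbiased gradient substitution via Theorem \ref{thm:surrogate_loss}, the projection/telescoping argument, and the choice $\eta = \sqrt{2/T}/L$—is precisely that argument. Your added sanity check that $L$ majorizes every realized stochastic gradient norm is a reasonable small refinement but does not change the route.
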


\section{Empirical Justification of Assumptions}

In this section we provide empirical evidence for Assumptions \ref{hyp:convex}-\ref{hyp:convex_surrogate_loss}.

\subsection{Convexity}
\begin{figure*}[!t]
\includegraphics[width=\linewidth]{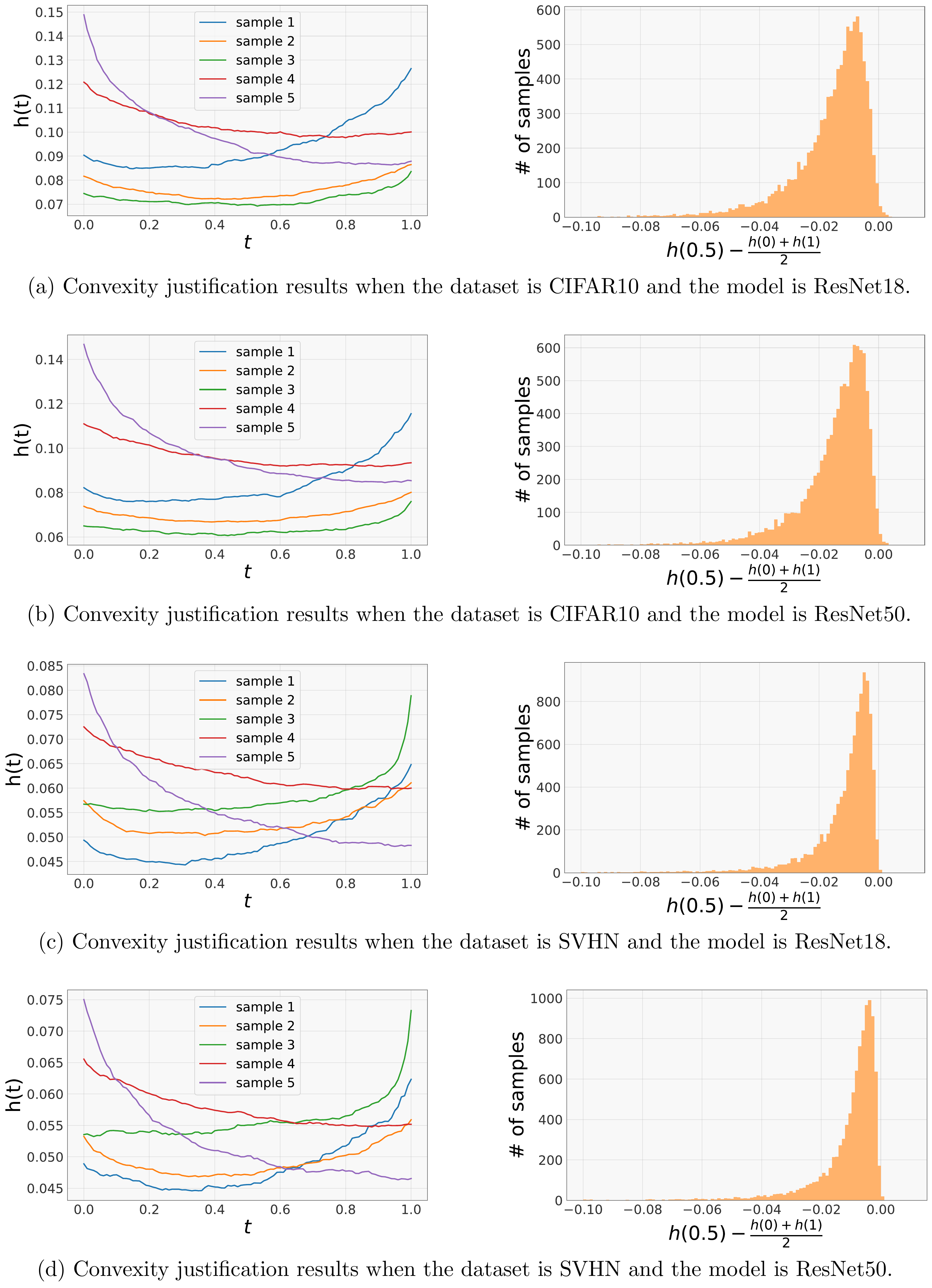}
\vspace{-3ex}
\caption{Empirical justification for the convexity assumption for $\ell(\bp; \bq)$. See text for details.}
\label{fig:convex}
\end{figure*}
\begin{figure*}[!t]
\includegraphics[width=\linewidth]{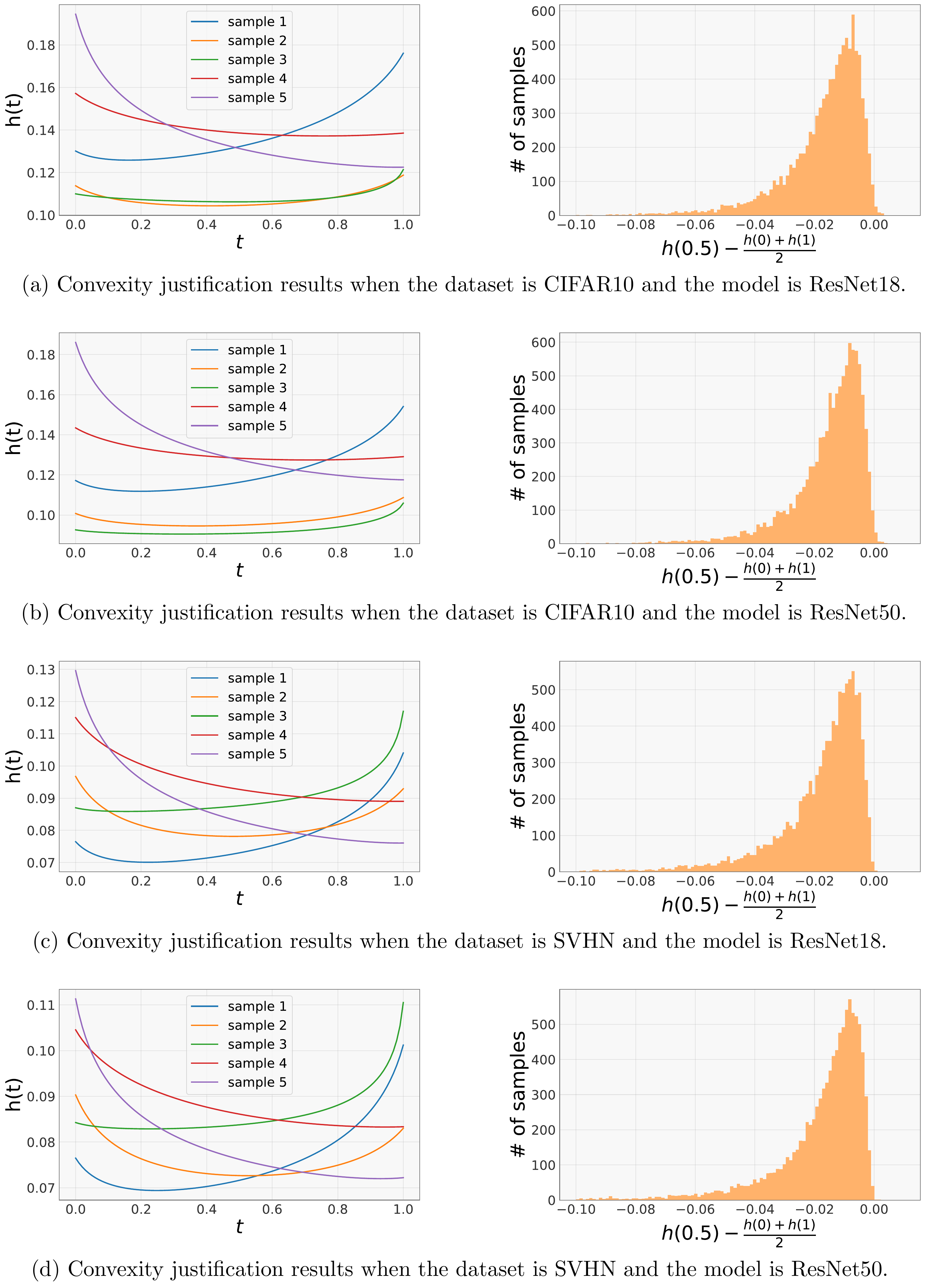}
\vspace{-3ex}
\caption{Empirical justification for the convexity assumption for $\ell^{\sf prob}(\bp; \bq)$. See text for details.}
\label{fig:convex_surrogate}
\end{figure*}

Assumptions \ref{hyp:convex} and \ref{hyp:convex_surrogate_loss} state that $\ell(\bp; \bq)$ and $\ell^{\sf prob}(\bp; \bq)$ are convex in $\bp$.
We first verify the convexity of $\ell(\bp; \bq)$ by uniformly sampling $\bq, \bp_1, \bp_2$ from $Delta_M$ and plotting the function value of $\ell(\bp; \bq)$ for $\bp \in [\bp_1, \bp_2]$.

The left plot of Figure~\ref{fig:convex}(a) shows the function value of $h(t)=\ell\left((1-t)\cdot\bp_1 + t\cdot\bp_2; \bq\right)$ for $t \in [0,1]$.
It can be seen that all 5 curves are approximately convex in $t$. In the right histogram plot, we evaluate $h(0.5) - \frac{1}{2}(h(0) + h(1))$ for randomly chosen tuples $\bq, \bp_1, \bp_2$, which should be non-positive if Assumption \ref{hyp:convex} holds. Indeed, among 10,000 random samples, $h(0.5) - \frac{1}{2}(h(0) + h(1)) \leq 0$ holds true $99.5\%$ of the time. For the remaining $0.5\%$ with positive difference, the deviation from $0$ is quite small, which is likely due to the estimation error for $\ell$.
These empirical observations validate the assumption that the loss function $\ell(\bp; \bq)$ is convex in $\bq$.

We can observe similar trends for different (dataset, model) pairs in (CIFAR10, ResNet50), (SVHN, ResNet18) and (SVHN, ResNet50), as shown in Figure~\ref{fig:convex}(b-d). Figure~\ref{fig:convex_surrogate} shows the same trend for the various datasets and models for the surrogate loss $\ell^{\sf prob}(\bp; \bq)$.

\subsection{Symmetric optimality}

\begin{figure*}[!t]
\includegraphics[width=\linewidth]{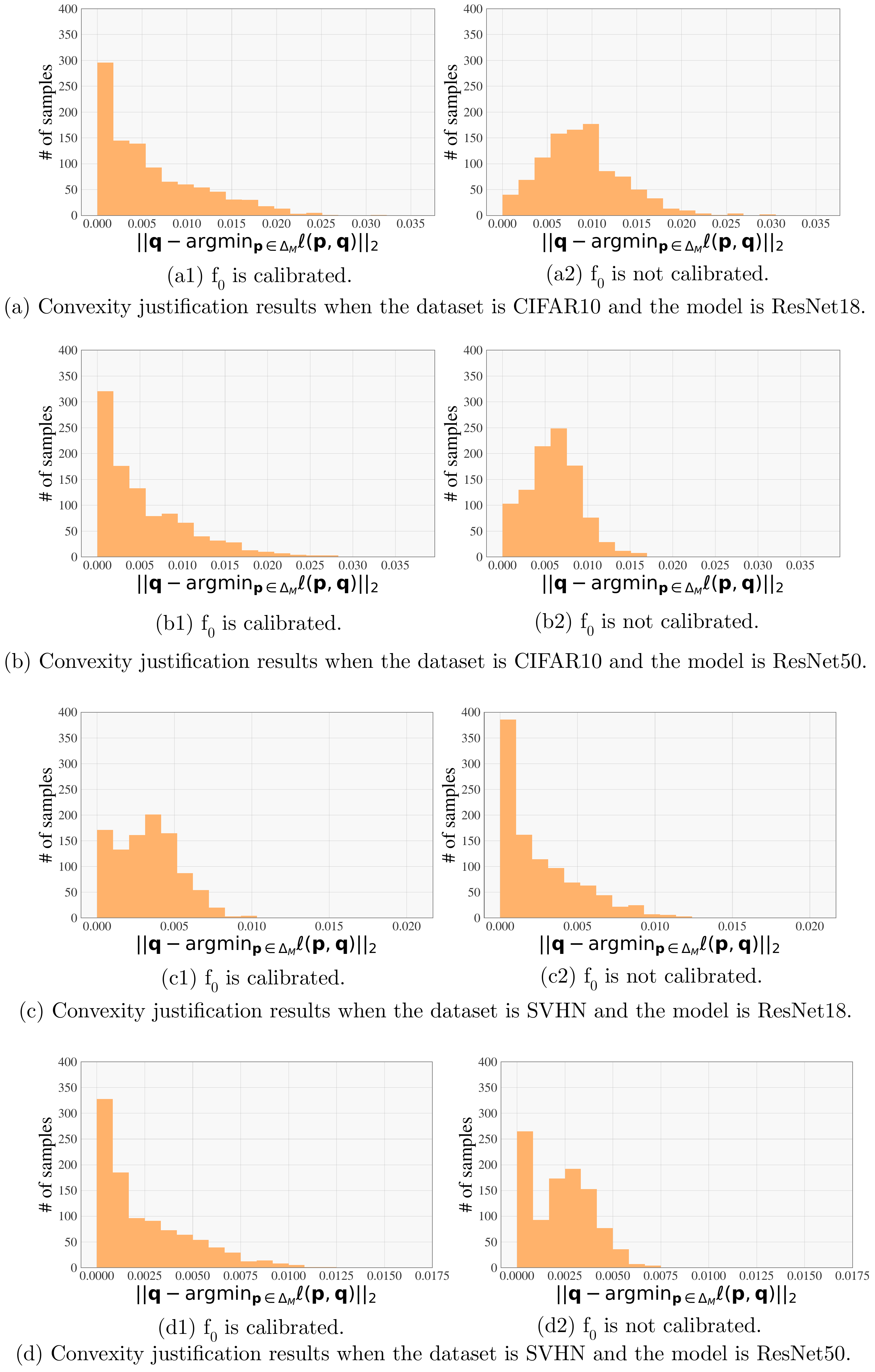}
\vspace{-3ex}
\caption{Empirical justification for the symmetric optimality assumption. See text for details.}
\label{fig:sym_opt}
\end{figure*}

To empirically validate the \textit{symmetric optimality} assumption (Assumption \ref{hyp:sym_opt}), we measure the $L_2$ distance $||\bq - \argmax_{\bp\in\Delta_M} \ell(\bp; \bq)||_2$ for 1000 randomly sampled $\bq$. We evaluate both un-calibrated and calibrated base classifiers $f_0$, where calibration is done using temperature scaling~\cite{guo2017calibration}. Same as above, we evaluate on both CIFAR10 and SVHN using ResNet18 and ResNet50 base classifiers. The optimal re-weight factor $\bp:=\argmax_{\bp\in\Delta^{M-1}} \ell(\bp; \bq)$ is computed by optimizing $\ell(\bp; \bq)$ with gradient ascent using gradient estimates obtained from Algorithm \ref{alg:l_grad_estimate}. 

Figure~\ref{fig:sym_opt} shows the histogram of the $L_2$ distances $||\bq - \argmax_{\bp\in\Delta_M} \ell(\bp; \bq)||_2$ for different samples of $\bq$. The left plot in each subfigure shows the result for a well-calibrated $f_0$, where the distance is skewed towards 0 with more than half of the samples having distance smaller than $0.005$. In comparison, the right plot for un-calibrated $f_0$ has a much higher density for larger values, which shows that a well-calibrated classifier better satisfies Assumption \ref{hyp:sym_opt}.

\section{Dataset Processing and Training Set-up}
\paragraph{ArXiv dataset processing.}
We select papers from the Computer Science domain and use the first category as the true label $y$. We specifically consider the 23 most populated categories, which are cs.NE, cs.SE, cs.LO, cs.CY, cs.CV, cs.SI, cs.AI, cs.CR, cs.SY, cs.PL, cs.CL, cs.IR, cs.RO, cs.DS, cs.NI, cs.CC, cs.GT, cs.LG, cs.IT, cs.DM, cs.HC, cs.DB, cs.DC.

For the feature vector $\bx$, we compute the tf-idf vector of each paper's abstract after removing words that appear in less than 30 papers among all papers in the dataset. We further remove papers whose numbers of words after the previous filtering step is smaller than 20.

\paragraph{Training set-up for the base model $f_0$.}
For the experiments on CIFAR10 under simulated shift, we train the base ResNet18 classifier $f_0$ for 150 epochs using Adam with batch size as 128, and drop learning rate twice at $1/2$ and $3/4$ of total training epochs.
For the experiments on ArXiv, we train a multinomial regression model $f_0$ with L2 regularization. 
The L2 regularization coefficient in the loss function is selected as $10^{-6}$, which achieves the best validation accuracy among the choices in $\{10^{-5}, 10^{-6}, 10^{-7}, 10^{-8}\}$.

\end{document}